% !TEX program = pdflatex

\RequirePackage[l2tabu, orthodox]{nag}
\documentclass[12pt]{article}

\usepackage[nonatbib,final]{nips_2016}

\usepackage{setspace}
\usepackage{makecell}
\newcolumntype{M}[1]{>{\centering\arraybackslash}m{#1}}
% FONTS
% FONTS
\usepackage[T1]{fontenc}

\usepackage{tgtermes}
\usepackage{amsmath}
% \usepackage[subscriptcorrection,
%             amssymbols,
%             mtpbb,
%             mtpcal,
%             nofontinfo  % suppresses all warnings
%            ]{mtpro2}
\usepackage{scalefnt,letltxmacro}
\LetLtxMacro{\oldtextsc}{\textsc}
\renewcommand{\textsc}[1]{\oldtextsc{\scalefont{1.10}#1}}
\usepackage[scaled=0.92]{PTSans}
\usepackage{inconsolata}
\usepackage{mathbbol}
\usepackage{amsthm}
\usepackage{amssymb}

\usepackage{setspace}

% COLOR
\usepackage[usenames,dvipsnames]{xcolor}
\definecolor{shadecolor}{gray}{0.9}

% SPACING and TEXT
% \usepackage[final,expansion=alltext]{microtype}
\usepackage[english]{babel}
\usepackage{afterpage}
\usepackage{framed}
\usepackage{nicefrac}

% Redefine the leftbar environment to accept a width and coloring options
%
{\endMakeFramed}

% Define a paragraph header function
\DeclareRobustCommand{\parhead}[1]{\textbf{#1}~}

% EDITING
% line numbering in left margin
\usepackage{lineno}

% ragged paragraphs in right margin
\usepackage{ragged2e}

% ragged paragraphs in left margin
\usepackage{marginnote}

% paragraph counter in right margin

\newcounter{parcount}

% paragraph helper

% COUNTERS

% FIGURES
\usepackage{graphicx}
\usepackage[labelfont=bf]{caption}
\usepackage[format=hang]{subcaption}
\usepackage{wrapfig}

% TABLES
\usepackage{booktabs}
\usepackage{multirow}
\usepackage{longtable}

% BIBLIOGRAPHY
\usepackage{natbib}
\usepackage{bibunits}

% ALGORITHMS
\usepackage[algoruled]{algorithm2e}
\usepackage{listings}
\usepackage{fancyvrb}
\fvset{fontsize=\normalsize}

\SetKwComment{Comment}{$\triangleright$\ }{}

% \usepackage{listings}
% \usepackage{fancyvrb}
% \fvset{fontsize=\normalsize}
% \usepackage{algorithm}
% \usepackage{algorithmic}

% HYPERREF
\usepackage[colorlinks,linktoc=all]{hyperref}
\usepackage[all]{hypcap}
\hypersetup{citecolor=Violet}
\hypersetup{linkcolor=black}
\hypersetup{urlcolor=MidnightBlue}

% CLEVEREF must come after HYPERREF
\usepackage[nameinlink]{cleveref}

% ACRONYMS
\usepackage[acronym,smallcaps,nowarn]{glossaries}
% \makeglossaries

% COLOR DEFINITIONS

% LISTINGS DEFINTIONS

\usepackage{listings}
\usepackage{lstbayes}
\lstset{language=C++,
  keywordstyle=\color{MidnightBlue}\bfseries,
  keywordstyle=[2]\color{BrickRed}\bfseries,
  keywordstyle=[3]\color{Violet}
}
\lstdefinestyle{mystyle}{
    commentstyle=\color{OliveGreen},
    numberstyle=\tiny\color{black!60},
    stringstyle=\color{BrickRed},
    basicstyle=\ttfamily\scriptsize,
    breakatwhitespace=false,
    breaklines=true,
    captionpos=b,
    keepspaces=true,
    numbers=none,
    numbersep=5pt,
    showspaces=false,
    showstringspaces=false,
    showtabs=false,
    tabsize=2
}
\lstset{style=mystyle}

\usepackage{enumitem}

\usepackage{centernot}

% \DeclareRobustCommand{\mb}[1]{\mathbold{#1}}

\DeclareMathOperator*{\argmin}{arg\,min}

\crefname{lemma}{lemma}{lemmas}
\Crefname{lemma}{Lemma}{Lemmas}
\crefname{thm}{theorem}{theorems}
\Crefname{thm}{Theorem}{Theorems}
\crefname{prop}{proposition}{propositions}
\Crefname{prop}{Proposition}{Propositions}
\crefname{defn}{definition}{definitions}
\Crefname{defn}{Definition}{Definitions}
\creflabelformat{equation}{#1#2#3}
\crefname{equation}{eq.}{eqs.}
\Crefname{equation}{Eq.}{Eqs.}
\Crefname{section}{\S}{\S}
\crefname{figure}{fig.}{figs.}
\Crefname{figure}{Fig.}{Figs.}
\crefname{algorithm}{alg.}{algs.}
\Crefname{algorithm}{Alg.}{Algs.}

\newtheorem{thm}{Theorem} % reset theorem numbering for each chapter
\newtheorem{defn}{Definition} % definition numbers are dependent on theorem numbers
\newtheorem{prop}[thm]{Proposition}
 % same for example numbers
\newtheorem{lemma}[thm]{Lemma}

\newcommand\dif{\mathop{}\!\mathrm{d}}

\newcommand{\g}{\, | \,}
\newcommand{\s}{\, ; \,}

\newcommand{\E}[2]{\mathbb{E}_{#1}\left[#2\right]}

\usepackage{booktabs,arydshln}
\makeatletter
\def\adl@drawiv#1#2#3{%
        \hskip.5\tabcolsep
        \xleaders#3{#2.5\@tempdimb #1{1}#2.5\@tempdimb}%
                #2\z@ plus1fil minus1fil\relax
        \hskip.5\tabcolsep}
\newcommand{\cdashlinelr}[1]{%
  \noalign{\vskip\aboverulesep
           \global\let\@dashdrawstore\adl@draw
           \global\let\adl@draw\adl@drawiv}
  \cdashline{#1}
  \noalign{\global\let\adl@draw\@dashdrawstore
           \vskip\belowrulesep}}
\makeatother

\newacronym{KL}{kl}{Kullback-Leibler}
\newacronym{EO}{eo}{equal opportunities}
\newacronym{AA}{aa}{affirmative action}
\newacronym{FTU}{ftu}{fairness through unawareness}
\newacronym{PS}{ps}{personal statement}
\newacronym{SEM}{sem}{structural equation model}
\newacronym{ML}{ml}{machine learning}

\newacronym{CS}{cs}{computer science}

\usepackage{tikz}
\usetikzlibrary{bayesnet}
\usepackage{pgfplots}
\pgfplotsset{compat=newest}
\pgfplotsset{plot coordinates/math parser=false}
\usepgfplotslibrary{statistics}

\pgfdeclarelayer{edgelayer}
\pgfdeclarelayer{nodelayer}
\pgfsetlayers{edgelayer,nodelayer,main}

\definecolor{hexcolor0xbfbfbf}{rgb}{0.749,0.749,0.749}

\tikzset{>=latex}
\tikzstyle{none}   = [inner sep=0pt]
\tikzstyle{line}   = [ thick, -, shorten <=1pt, shorten >=1pt ]
\tikzstyle{arrow}  = [ thick,  ->, shorten <=1pt, shorten >=1pt ]
\tikzstyle{ardash} = [ thick dotted, ->, shorten <=1pt, shorten >=1pt ]

\tikzstyle{empty}=[circle,opacity=0.0,text opacity=1.0,minimum width=4pt,minimum height=4pt]
\tikzstyle{box}=[rectangle,fill=White,draw=Black]
\tikzstyle{filled}=[circle,fill=hexcolor0xbfbfbf,draw=Black]
\tikzstyle{hollow}=[circle,fill=White,draw=Black]
\tikzstyle{param}=[rectangle,fill=Black,draw=Black,inner sep=0pt,minimum width=4pt,minimum height=4pt]
\tikzstyle{paramhollow}=[rectangle,fill=White,draw=Black,inner sep=0pt,minimum
width=4pt,minimum height=4pt]

\newcommand*\circled[1]{\tikz[baseline=(char.base)]{
            \node[shape=circle,draw,inner sep=2pt] (char) {#1};}}

\usepackage{times}
\usepackage{adjustbox}
\usepackage{tcolorbox}
\usepackage{enumitem}
\usepackage{natbib}

\usepackage[defaultlines=4,all]{nowidow}

\newcommand{\ML}{\mathrm{ml}}
\newcommand{\acc}{\mathrm{ml}}
\newcommand{\eo}{\mathrm{eo}}
\newcommand{\rmaa}{\mathrm{aa}}
\newcommand{\new}{\mathrm{new}}

\everypar=\expandafter{\the\everypar\loosness=-1 }
\linepenalty=1000

\title{Equal Opportunity and Affirmative Action \\via
Counterfactual Predictions}

\author{
  Yixin Wang\\
  % Department of Statistics\\
  Columbia University\\
  % \texttt{yixin.wang@columbia.edu} \\
  \And
  Dhanya Sridhar\\
  % Data Science Institute\\
  Columbia University\\
  % \texttt{dhanya.sridhar@columbia.edu} \\
  \And
  David M.~Blei\\
  % Data Science Institute\\
  % Departments of Computer Science and Statistics\\
  Columbia University\\
  % \texttt{david.blei@columbia.edu} \\
}

\begin{document}

\maketitle

\begin{bibunit}[alp]

% !TEX root = fair_dcf.tex
\begin{abstract}

  \Gls{ML} can automate decision-making by learning to predict
  decisions from historical data.  However, these predictors may
  inherit discriminatory policies from past decisions and reproduce
  unfair decisions.  In this paper, we propose two algorithms that
  adjust fitted \gls{ML} predictors to make them fair.  We focus on
  two legal notions of fairness: (a) providing \gls{EO} to individuals
  regardless of sensitive attributes and (b) repairing historical
  disadvantages through \gls{AA}.  More technically, we produce fair
  \gls{EO} and \gls{AA} predictors by positing a causal model and
  considering counterfactual decisions.  We prove that the resulting
  predictors are theoretically optimal in predictive performance while
  satisfying fairness.  We evaluate the algorithms, and the trade-offs
  between accuracy and fairness, on datasets about admissions, income,
  credit,~and~recidivism.

\end{abstract}

%%% Local Variables:
%%% mode: latex
%%% TeX-master: "fair_dcf"
%%% End:

% !TEX root = fair_dcf.tex
\section{Introduction}
\label{sec:introduction}

\glsresetall

\Gls{ML} methods can automate costly decisions by learning from
historical data. For example, \gls{ML} algorithms can assess the risk of
recidivism to decide bail releases or predict admissions to determine
which college applicants should be further reviewed~\citep{waters2014grade}.

Consider an admissions committee that decides which applicants to
accept to a university.  Given historical data about the applicants
and the admission decisions, an \gls{ML} algorithm can learn to predict who
will be admitted and who will not.  The \gls{ML} predictor might accurately
simulate the committee's decisions, but it might also inherit some of
the undesirable properties of the committee.  If the committee was
systematically and unfairly biased then its \gls{ML} replacement will be
biased as well.

In this paper, we develop algorithms for learning \gls{ML} predictors that
are both accurate and fair.  Our methods maintain as much fidelity as
possible to the decisions of the committee, but they adjust the
predictor to ensure that the algorithmic decisions are fair.

We focus on two legal notions of fairness: \gls{EO} and \gls{AA}.  An
\gls{EO} decision provides the same opportunities to similar
candidates with different backgrounds~\citep{barocas2016big}. In
admissions, for example, an \gls{EO} decision admits applicants based
only on their qualifications; applicants with the same merit, such as
a test score, should have the same chance of admission regardless of
their race or sex.

\glsreset{AA}

Some applicants might have lower test scores only because they belong
to a historically disadvantaged demographic group, one that has less
access to opportunities.  The legal notion of \gls{AA} acknowledges
these historical disadvantages \citep{anderson2002integration}. For
example, \gls{AA} will correct for situations where male applicants
have easier access to extra test preparation.  (Notice that, by
definition, an \gls{AA} decision is not an \gls{EO} decision because
\gls{AA} decisions provide advantages based on group membership.)

Using these two notions of fairness, we develop algorithms that adjust
\gls{ML} predictors to be fair predictors.  The first produces algorithmic
decisions that ensure equal opportunities for disadvantaged groups;
the second produces decisions that exercise affirmative action to
repair existing disparities.  We prove the algorithms are
theoretically optimal---they provide as good predictions as possible
while still being fair.

In detail, we use ideas from causal machine
learning~\citep{pearl2009causality,peters2017elements}.  First we
posit a causal model of the historical decision-making process.  In
that model, the decision can be affected by all the attributes, both
sensitive and non-sensitive, and the sensitive attributes can also
affect the non-sensitive ones (e.g., sex can affect test scores).
Then we consider algorithmic decisions as causal variables, ones whose
values are functions of the attributes.  Once framed this way, we can
ask counterfactual questions about an algorithm, e.g., ``what would
the algorithm decide for this applicant if she had achieved the same
test score but was a male?''  We operationalize \gls{EO} and \gls{AA}
fairness as probabilistic properties of such counterfactual decisions.
(\gls{AA}-fairness is the same as counterfactual fairness, proposed in
\citet{kusner2017counterfactual}.)  Finally, we derive a method that
adjusts a classical \gls{ML} decision-maker to be \gls{EO}-fair, and
that adjusts an \gls{EO} decision-maker to exercise \gls{AA}.

We study our algorithms on simulated admissions data and on three
public datasets, about income, credit, and recidivism.  We examine the
gap in accuracy between classical prediction and
\gls{EO}/\gls{AA}-fair prediction, and we investigate the degree of
unfairness that classical prediction would exhibit.  Compared to other
approaches to \gls{EO}/\gls{AA}-fairness, ours provide higher accuracy
while~remaining~fair.

%%% Local Variables:
%%% mode: latex
%%% TeX-master: "fair_dcf"
%%% End:

% !TEX root = fair_dcf.tex

\parhead{Related Work.}  Several threads of related work draw on
causal models to formalize fairness.  \citet{kilbertus2017avoiding}
relate paths and variables in causal models to violations of \gls{EO}
and \gls{AA}.  \citet{zhang2018fairness} decompose existing fairness
measures into discrimination along different paths
%precise amounts of discrimination along different paths
in the causal graph, and propose fair decision algorithms that satisfy
the path-specific criteria.  \citet{kusner2017counterfactual}
introduce counterfactual fairness and an algorithm to satisfy it.  Our
work also uses causal graphs to formalize the \gls{EO} and \gls{AA}
criteria, but our fair algorithms are designed to minimally correct
fitted \gls{ML} predictors.  Like
FairLearning~\cite{kusner2017counterfactual}, our \gls{AA}-fair
algorithm also satisfies counterfactual fairness, but our algorithm
has higher fidelity to the historical decisions.

Other research has proposed a variety of statistical definitions of
fairness for supervised
learning~\citep{hashimoto2018fairness,chen2019fairness,kleinberg2016inherent,chouldechova2017fair,zafar2017fairness}. \citet{hardt2016equality}
and \citet{kallus2018residual} show, however, that these approaches
cannot distinguish between different mechanisms of discrimination and
they are not robust to changes in the data distribution.  By building
on causal machine learning, our work separates direct discrimination
from indirect bias, and it is robust to changes in the data
distribution.

Another line of work focuses on data preprocessing.
\citet{zemel2013learning} learn intermediate representations that
remove information about the sensitive attributes;
\citet{calders2010three} and \citet{kamiran2012data} reweight training
data to control false positive/negative rates. In contrast, our
\gls{EO}-fair and \gls{AA}-fair algorithms adjust \gls{ML} predictions
without changing the training data.

\citet{wang2019repairing} modify black-box \gls{ML} predictors
to correct for historical disadvantages, as in \gls{AA}. 
They define adjusted distributions over attributes for the disadvantaged group
and solve optimal transport problems to repair predictors.
Our predictors also minimally adjust \gls{ML} predictors
but we derive adjustments using causality.

Finally, \citet{dwork2012fairness} define individual fairness, which
formalizes \gls{EO} without causality.  Given a distance metric
between individuals, it requires that similar individuals receive
similar decisions.  Our \gls{EO}-fair algorithm recovers this
requirement without relying on an explicit distance metric, which can
be difficult to construct.  \citet{dwork2012fairness} also specify
group-level \gls{AA} fairness as independence between the sensitive
attribute and decision (demographic parity).  We exercise \gls{AA} for
each individual while also satisfying demographic parity.

%%% Local Variables:
%%% mode: latex
%%% TeX-master: "fair_dcf"
%%% End:

% !TEX root = fair_dcf.tex

\parhead{Contributions.} We develop optimal \gls{EO} and \gls{AA}
algorithms.  These algorithms modify existing \gls{ML} predictors to produce
\gls{EO}-fair and \gls{AA}-fair decisions, and we prove these
decisions maximally recover the \gls{ML} predictors under the fairness
constraints. While existing approaches must omit descendants of the
sensitive attributes to construct \gls{AA}-fair decisions
\citep{kusner2017counterfactual,kilbertus2017avoiding}, our
\gls{AA} algorithm uses all available attributes and is still
\gls{AA}-fair. Empirically, we show that our \gls{EO} and \gls{AA}
algorithms produce better decisions than existing approaches that
satisfy the same fairness criteria.

% Though the \gls{AA} criterion is the same as counterfactual fairness
% \citep{kusner2017counterfactual},

%%% Local Variables:
%%% mode: latex
%%% TeX-master: "fair_dcf"
%%% End:

% !TEX root = fair_dcf.tex
\section{Assessing fairness with counterfactuals}
\label{sec:criteria}

\begin{figure}
  \begin{minipage}{.5\textwidth}
    \centering
    \captionsetup{width=.9\linewidth}
    \footnotesize
  \begin{center}
    \begin{tabular}{lcccccc}
      \toprule
      \textbf{ID}
      & \textbf{Sex}
      & \textbf{Test}
      & \textbf{Admit}
      & $\hat{y}^{\ML}$
      & $\hat{y}^{\eo}$
      & $\hat{y}^{\rmaa}$ \\
      \midrule
      1 & f &  54 & yes \\
      2 & m & 66 & no \\
      \vdots & \vdots & \vdots &\vdots \\
      5000 & f & 44 & no \\
      \midrule
      A & f & 85 & ? & 0.67 & 0.77 & 0.78 \\
      B & m & 85 & ? & 0.84 & 0.77 & 0.76 \\
      C & f & 65 & ? & 0.57 & 0.69 & 0.70 \\
      \bottomrule
    \end{tabular}
  \end{center}
    \caption{We simulate an unfair admissions
      process that violates both \gls{EO} and
      \gls{AA}. This table shows the training data
      (1-5000) and three new applicants (A, B,
      C). \label{tab:admissions}}
  \end{minipage}
  \begin{minipage}{.5\textwidth}
    \centering
    \captionsetup{width=.9\linewidth}
    \includegraphics[scale=0.4]{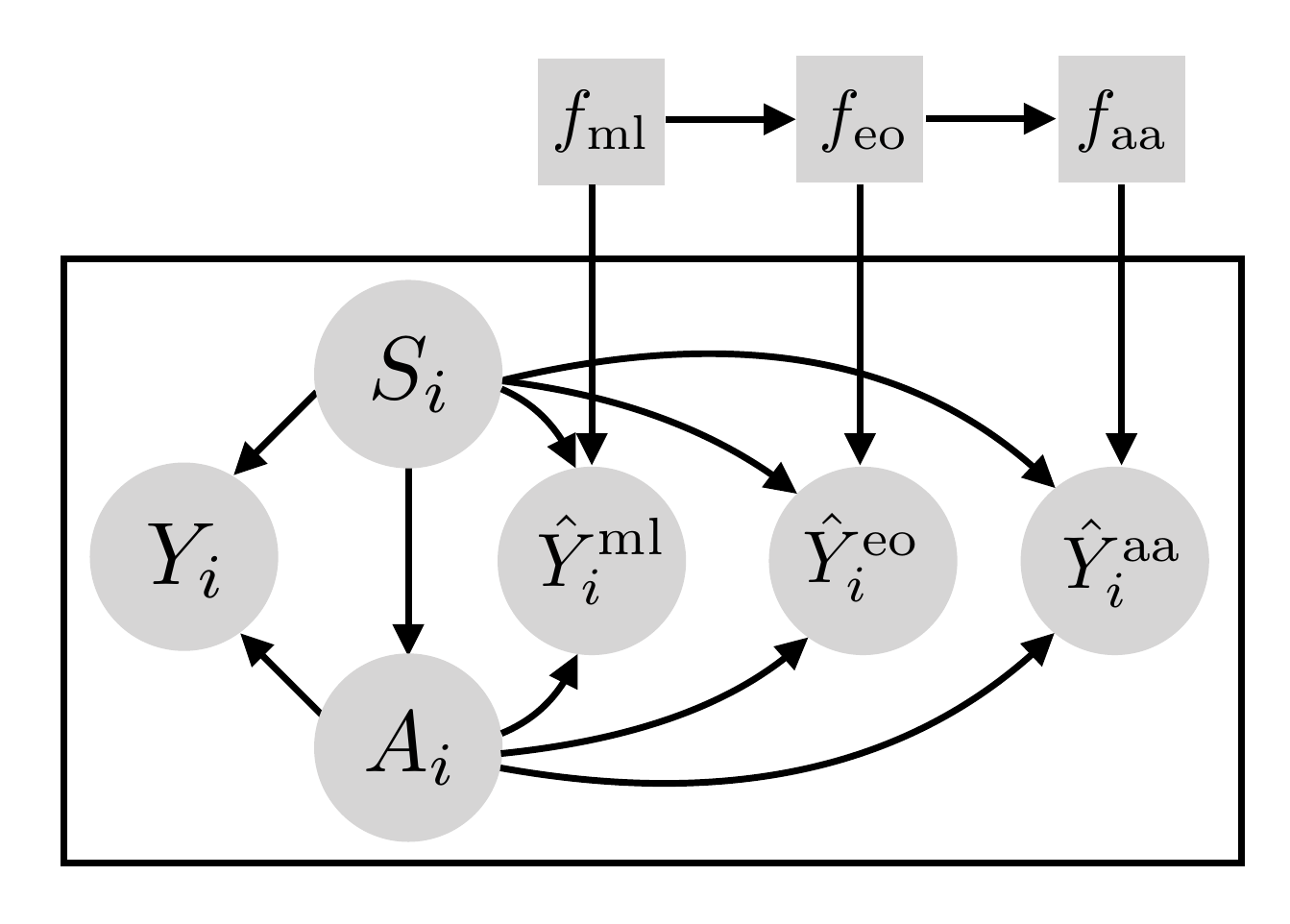}
    \caption{Classical \gls{ML}, \gls{EO} and \gls{AA} decisions
      are causal variables. \gls{EO} predictors upgrade
      classical \gls{ML} ones, and \gls{AA} adjust \gls{EO} ones. \label{fig:eoaagraph}}
  \end{minipage}
\end{figure}

As a running example, consider automating the admissions process at a
university.  Using a dataset of past admissions, the goal is to
algorithmically compute the admissions decision for new applicants.
The dataset contains $n$ applicants and the committee's decision about
whether to admit them.  Each applicant has demographic attributes,
such as sex, religion, or ethnicity, that the university has deemed
sensitive.  Each applicant also has other attributes, such as a
standardized test score or grade point average, that are not
sensitive.

For simplicity, suppose each applicant has two attributes, their sex
and their score on a standardized test.  The sex of the applicant is a
sensitive attribute; the test score is not. (Our algorithms extend to
multiple attributes but this simple case illustrates the concepts.)
\Cref{tab:admissions} shows an example of~such~data.~\looseness=-1

We will use a causal model of this decision-making process to define
two notions of fairness.

\parhead{A causal model of the decision.} First posit a \emph{causal}
model of the admissions process, such as the causal graph in
\Cref{fig:eoaagraph}.  It assumes that an applicant's sex $S$ and test
score $A$ affect the admissions decision $Y$.  For example, the
admissions committee might unfairly prefer to admit males.  The model
also assumes the applicant's sex can affect the test score.  For
example, female applicants might receive less exposure to test
preparation, a disparity that results in lower scores.\footnote{We
  acknowledge these assumptions might be debatable; it is up to the
  user of our method to posit a sensible causal model of the
  historical decision process.}  (This simple model illustrates the
core ideas; \Cref{sec:fair-decisions} discusses extensions to more
complicated causal graphs.)

The data in \Cref{tab:admissions} matches the assumptions.  It comes
from the following structural model,
\begin{align}
  \label{eq:admissions-model}
  \begin{split}
    s_i &\sim \mathrm{Bernoulli}(0.5),\\
    a_i \g s_i &= \max(0,
    \min(\lambda \cdot s_i + 100 \cdot \varepsilon, 100));
    \quad
    \varepsilon \sim \textrm{Uniform}[0,1] \\
    y_i \g s_i, a_i & \sim\mathrm{Bernoulli}(\sigma(-1.0 + \beta_a\cdot
    a + \beta_s \cdot s)).
  \end{split}
\end{align}
(We threshold the test score to mimic real-world test scores that are
usually bounded.)  We set $\beta_a = 2.0$ and $\beta_s = 1.0$ so that
for the same test score the committee is more likely to admit males
than females. Moreover, we set $\lambda = 2.0$ to model that females
perform less well on the test.

With data like \Cref{tab:admissions} and a causal model, we can
estimate the parameters of the functions that govern each variable in
\Cref{fig:eoaagraph}.  (This estimation makes the usual assumptions
about causal estimation, i.e., no open backdoor paths
\citep{pearl2009causality}.)

The causal model implies counterfactuals, how the distribution of a
variable changes when we intervene on others.  For example, we can
consider the counterfactual decision of a female applicant had she
been a male. \footnote{Even when sensitive attributes are immutable,
  we can define hypothetical interventions to articulate the
  counterfactuals \citep{greiner2011causal}. In this example,
  ''intervening on sex'' means to make a person born of a different
  sex.}  Denote $Y_i(s,a)$ to be the counterfactual decision of the
$i$th applicant when we set their test score to be $A_i=a$ and sex to
be $S_i=s$.

Marginal distributions of counterfactuals provide an alternative to
the \textit{do} notation, $P(Y_i(s,a)) = P(Y_i \, ; \,
\textrm{do}(S_i=s, A_i=a))$.  But counterfactual distributions can
also be more complex: for example, $P(Y_i(\textrm{female}) \g Y_i =
\textrm{Yes}, S_i=\textrm{male})$ is the probability of applicant
$i$'s counterfactual admission if he was female, given that he is
(factually) male and was admitted.  See Ch. 7 of
\citet{pearl2009causality} for a treatment of how causal models imply
distributions of counterfactuals.

We use counterfactual distributions to assess the fairness of
decisions, either those made by the committee or those produced by an
algorithm.  We next describe the counterfactual criteria of equal
opportunities and affirmative action.

\parhead{Equal opportunities and its counterfactual criterion.}  A
decision satisfies \gls{EO} if changing the sensitive attribute $S$
does not change the distribution of the decision $Y$.  Consider an
applicant with sex $s$ and test score $a$.  An \gls{EO} decision gives
the same probability of admission even when changing the sex to $s'$
(but keeping the test score at $a$).  In \Cref{tab:admissions},
candidates A and B have the same test score but different sex; they
should receive the same probability of admission.

\begin{defn}[The \gls{EO} criterion]\glsreset{EO} A decision $Y$ satisfies \gls{EO}
  in the sensitive attribute $S$ if
  \begin{align}
    \label{eq:equalopportunities}
    \begin{split}
      Y(s,a)  \g  \{S=s, A=a\}  \stackrel{d}{=} 
      Y(s',a)\g \{S=s, A=a\}
    \end{split}
  \end{align}
  for all $s, s'\in \mathcal{S}$ and $a \in \mathcal{A}$, where
  $\mathcal{S}$ and $\mathcal{A}$ are the domains of $S$ and
  $A$.\footnote{Two random variables are equal in distribution
  $X_1\stackrel{d}{=}X_2$ if they have the same distribution function:
  $P(X_1\leq x)=P(X_2\leq x)$ for all $x$.}
  \label{def:eocriterion}
\end{defn}

The \gls{EO} criterion differs from counterfactual fairness
\citep{kusner2017counterfactual}.  \gls{EO} focuses on the
counterfactual prediction had the applicant had the same non-sensitive
attribute $A$ but a different sensitive attribute $S$; counterfactual
fairness does not hold the attribute $A$ fixed.

The \gls{EO} criterion also differs from equalized odds
\citep{hardt2016equality}. Equalized odds requires the sensitive
attribute $S$ to depend on the decision $Y$ only via its ability to
predict desirable downstream outcomes (e.g., college
success). Equalized odds enforces statistical properties while
\gls{EO} assesses counterfactuals. The \gls{EO} criterion
assumes that the sensitive attribute $S$ does not causally affect the
decision $Y$. (We prove this in \Cref{sec:eothmpf}.)

\glsreset{AA}

\parhead{Affirmative action and its counterfactual criterion.}
\Gls{AA} aims to correct for the historical disadvantages that may be
caused by sensitive attributes.  For example, female applicants may
have had fewer opportunities for extracurricular test preparation,
which leads to lower test scores and a lower likelihood of
admission. Had they been male, they may have had more opportunities
for preparation, achieved a higher score, and been admitted.

The goal of \gls{AA} is to ensure that, all else being equal (e.g.,
effort or aptitude), an applicant would receive the same decision had
they not been in a disadvantaged group.  This is different from
\gls{EO} because it accounts for how the sensitive attribute affects
the other attributes, for example how an applicant's sex affects their
test score.

The \gls{AA} criterion requires that changing the sensitive attribute
$S$, along with the resulting change in the attribute $A$, does not
change the distribution of the decision $Y$.  Consider an applicant
with test score $a$ and sex $s$. Had they belonged to a different
group $s'$, they may have achieved a different test score $A(s')$. The
\gls{AA} criterion requires the counterfactual decision with the
alternate sex $Y(s', A(s'))$ to have the same distribution as the
decision with the current sex $Y(s, A(s))$.  It was first proposed by
\citet{kusner2017counterfactual}, who dubbed it \textit{counterfactual
  fairness}.

\begin{defn}[The \gls{AA} criterion, counterfactual fairness
  \citep{kusner2017counterfactual}] \glsreset{AA} A decision $Y$
  satisfies \gls{AA} if
  \begin{align}
    \label{eq:aacriterion}
    \begin{split}
      Y(s, A(s)) \g \{S=s, A=a\} \stackrel{d}{=} Y(s', A(s'))\g \{S=s,
      A=a\},
    \end{split}
  \end{align}
  for all $s'\in\mathcal{S}$. The random variable $A(s')$ is the
  counterfactual of $A$ under intervention of the sensitive attribute
  $S=s'$.
  \label{def:aacriterion}
\end{defn}
In \Cref{eq:aacriterion} the counterfactual attribute $A(s')$ depends
on the attributes of the applicant, $\{S=s, A=a\}$.  The variable
$A(\textrm{male}) \g \{S=\textrm{female}, A=\textrm{high score}\}$
captures, for example, that female applicants with high test scores
will have even higher test scores had they been male.  Assuming the
causal model in \Cref{fig:eoaagraph}, the \gls{AA} criterion implies
demographic parity \citep{dwork2012fairness,
  kusner2017counterfactual}, which requires the decision be independent
of the sensitive attribute. We note that a decision cannot be both
\gls{EO}-fair and \gls{AA}-fair.

\parhead{Disparate treatment and impact.} We have described two
counterfactual criteria for fairness.  One view on fairness is to seek
to avoid ``disparate treatment'' or ``disparate impact.'' Applicants
face disparate treatment when they are rejected by the biased
admissions committee because of their sex.  Applicants face disparate
impact when they are systematically disadvantaged because they do not
have the same opportunities for test preparation. From this
perspective on fairness, an \gls{EO}-fair decision eliminates
disparate treatment; an \gls{AA}-fair decision addresses disparate
impact.

\section{Constructing fair algorithmic decisions}
\label{sec:fair-decisions}

\Cref{def:eocriterion,def:aacriterion} define fairness in terms
of distributions of counterfactual decisions.  Using the causal model
of the decision-making process, these definitions help evaluate the
fairness of its outcomes.  The criteria can assess any decisions,
including those produced by a real-world process, such as an
admissions committee, or those produced by an algorithm, such as a
fitted \gls{ML} model.

Consider a classical \gls{ML} model that was fit to emulate historical
admissions data; if the historical decisions are not \gls{EO} or
\gls{AA} then neither will be the \gls{ML} decisions.  To this end, we
develop fair \gls{ML} predictors, those that are accurate with respect to
historical data and produce \gls{EO}-fair or \gls{AA}-fair decisions.
The key idea is to treat the algorithmic decision as another causal
variable in the model and then to analyze its fairness properties in
terms of the corresponding counterfactuals.  We will show how to
adjust decisions made by an \gls{ML} model to make them \gls{EO}-fair or
\gls{AA}-fair.

Formally, we use the language of \textit{decisions} and
\textit{predictors}.  Denote an algorithmic decision as $\hat{Y}$, a
causal variable that depends on the attributes $\{s, a\}$.  It comes
from a fitted probabilistic predictor, $\hat{Y} \sim f(s, a)$, where
$f(\cdot)$ is a probability density function. For example, an
admissions decision $\hat{Y} \sim f(s, a)$ is drawn from a Bernoulli
that depends on the attributes (e.g., a logistic regression).  By
considering algorithmic decisions in the causal model, we can infer
algorithmic counterfactuals $\hat{Y}(s,a)$ and assess whether they
satisfy the fairness criteria of \Cref{sec:criteria}.

\parhead{\gls{ML} decisions.}  Consider a classical machine learning
predictor, one that is fit to accurately predict the decision $Y$ from
$S$ and $A$.  In the admissions example, a binary decision, logistic
regression is a common choice.  The predictor $f_{\ML}(s, a)$ draws
the decision from a Bernoulli,
\begin{align*}
  \hat{Y}^{\acc} \g \{S=s, A=a\} \sim \mathrm{Bern}(\sigma(\beta_{A} \cdot
  a + \beta_{S} \cdot s + \beta_0)),
\end{align*}
and the coefficients are fit to maximize the likelihood of the
observed data.  When incorporated into the causal model, $f_{\acc}$
and $\hat{Y}^{\acc}$ are illustrated in \Cref{fig:eoaagraph}.

The \gls{ML} decision $\hat{Y}^{\acc}$ will accurately mimic the historical
data.  However, it will also replicate harmful discriminatory
practices.  The decision $\hat{Y}^{\acc}$ might violate \gls{EO} and
not give equal opportunities to applicants of different sex. Nor will
$\hat{Y}^{\acc}$ exercise \gls{AA}. If policy mandates that admissions
correct for the disparate impact of sex on the test score, then the \gls{ML}
decision cannot be used.

Return to the illustrative simulation in \Cref{tab:admissions}.  We
fit a logistic regression to the training data, which finds
coefficients close to the mechanism that generated the data.
Consequently, when used to form algorithmic decisions, it replicates
the unfair committee.  To see this, consider female applicant A
$(s=\textrm{f}, a=85)$ and male applicant B $(s=\textrm{m}, a=85)$ and
the classical \gls{ML} decisions for each.  For A, her probability of being
admitted is $67\%$.  For B, his probability of admission is $84\%$.
Despite their identical test scores, the female applicant is $17\%$
less likely to get in.

\parhead{\gls{ML} decisions that satisfy equal opportunities.}  We use the
\gls{ML} predictor $f_{\acc}$ to produce an \gls{EO} predictor $f_{\eo}$,
one whose decisions satisfy the \gls{EO} criterion.  Consider an \gls{ML}
predictor $f_{\acc}(s,a)$ and an applicant with attributes
$\{s_{\new},a_{\new}\}$.  Their \gls{EO} decision is
$\hat{Y}^{\eo}(s_{\new}, a_{\new}) \sim f_{\eo}(a_{\new})$, where
\begin{align}
  \label{eq:counterfactualeo}
  f_{\eo}(a_{\new}) &= \int f_{\acc}(s, a_{\new}) p(s) \dif s.
\end{align}
The distribution $p(s)$ is the population distribution of the
sensitive attribute.

The \gls{EO} decision probability holds the non-sensitive attribute
$a_{\new}$ fixed, and takes a weighted average of the \gls{ML} predictor for
the sensitive attribute $s$.  Note all of its ingredients are
computable: the distribution $p(s)$ is estimated from the data and
$f_{\acc}(s, a)$ is the fitted \gls{ML} predictor.

The \gls{EO} decision $\hat{Y}^{\eo} \sim f_{\eo}(a_{\new})$ satisfies
\Cref{def:eocriterion}: applicants with the same test score will have
the same chance of admissions regardless of their sex.  Further, it is
the most accurate among all \gls{EO}-fair decisions.  We prove that it is \gls{EO}-fair and minimally corrects
  $\hat{Y}^{\acc}$..~\looseness=-1

\begin{thm}[\gls{EO}-fairness and optimality of \gls{EO}
  decisions\label{thm:eothm}]
The decision $\hat{Y}^{\eo} \sim f_{\eo}(s, a)$ is \gls{EO}-fair.
Moreover, among all \gls{EO}-fair decisions, $\hat{Y}^{\rm{eo}}$
maximally recovers the ML decision $\hat{Y}^{\acc}$,
\begin{align*}
\hat{Y}^{\rm{eo}} =
\argmin_{Y^{\rm{eo}}
\in\mathcal{Y}^{EO}} \E{S, A}{\mathrm{KL}(P(\hat{Y}^{\acc}(S,
A))||P(Y^{\eo}(S, A)))},
\end{align*}
where $\mathcal{Y}^{EO}$ is the set of \gls{EO}-fair decisions. (The
proof is in \Cref{sec:eothmpf}.)% \end{enumerate}
\end{thm}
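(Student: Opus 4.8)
The plan is to prove the two assertions in turn: first that $\hat{Y}^{\eo}$ satisfies \Cref{def:eocriterion}, and then that it is the unique KL-projection of the ML decision onto the feasible set $\mathcal{Y}^{EO}$. The fairness claim is almost immediate from the construction, since the predictor $f_{\eo}(a_{\new}) = \int f_{\acc}(s, a_{\new}) p(s) \dif s$ marginalizes out the sensitive attribute and is therefore a function of $a_{\new}$ alone. First I would make this precise: for any $s, s' \in \mathcal{S}$, the counterfactual decisions $\hat{Y}^{\eo}(s,a)$ and $\hat{Y}^{\eo}(s',a)$ are both drawn from the same law $f_{\eo}(a)$, and conditioning on the factual event $\{S=s, A=a\}$ leaves each law unchanged because the decision's structural noise is exogenous. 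The two sides of \Cref{eq:equalopportunities} then coincide in distribution, which is EO-fairness.

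For optimality I would begin by characterizing $\mathcal{Y}^{EO}$. Applying the same exogenous-noise argument in reverse to \Cref{eq:equalopportunities}, a decision with predictor $f(s,a)$ is EO-fair if and only if $f(s,a)=f(s',a)$ for all $s,s'$, i.e. its law depends on the attributes only through $a$. Writing a generic feasible decision as $Y^{\eo}(s,a) \sim g(a)$, the objective is $\E{S,A}{\KL{f_{\acc}(S,A)}{g(A)}}$, and the key structural point is that $g$ enters only through its value at each $a$, with no coupling across distinct $a$. This lets me replace the outer expectation by a pointwise problem: for (almost) every fixed $a$, minimize the conditional expected divergence over the single distribution $g(a)$, subject only to $g(a)$ being a probability distribution.

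The pointwise subproblem is a cross-entropy minimization. Expanding $\KL{f_{\acc}}{g}$ and discarding the $g$-independent entropy term, the minimizer maximizes $\int \big(\sum_y f_{\acc}(y \mid s,a)\log g(y\mid a)\big) p(s)\dif s = \sum_y \big(\int f_{\acc}(y\mid s,a) p(s)\dif s\big)\log g(y\mid a)$, which by the information (Gibbs) inequality is attained uniquely at $g^\star(y\mid a) = \int f_{\acc}(y\mid s,a)p(s)\dif s = f_{\eo}(y\mid a)$. Since this holds at every $a$, $f_{\eo}$ also minimizes the averaged objective, establishing the argmin.

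The step I expect to require the most care is making the counterfactual expectation precise so that the averaging over the sensitive attribute is the population marginal $p(s)$ appearing in $f_{\eo}$, rather than the observational conditional $p(s\mid a)$. Because the assumed causal model has $S \to A$, these distributions differ, and reading $\E{S,A}{\cdot}$ naively over the observational joint would yield a different projection. I would resolve this through the intervention semantics of $\hat{Y}^{\acc}(S,A)$: averaging the ML predictor over counterfactual interventions $\mathrm{do}(S=s)$ with $s \sim p(s)$ while holding $A=a$ fixed, which is exactly the comparison the EO criterion makes across sexes. The remaining technical points---justifying the decoupling across values of $a$, the almost-everywhere qualifier, and uniqueness of the Gibbs minimizer---are routine once this semantics is fixed.
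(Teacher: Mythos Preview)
Your proposal is correct and follows essentially the same route as the paper: both first characterize $\mathcal{Y}^{EO}$ as the decisions whose predictor depends on $a$ alone (the paper packages this as a separate lemma, ``\gls{EO} $\Leftrightarrow$ no $S\!\to\!\hat Y$ arrow''), then drop the $g$-independent entropy term and minimize the resulting cross-entropy pointwise in $a$ to obtain $g^\star(\cdot\mid a)=\int f_{\acc}(\cdot\mid s,a)\,p(s)\dif s$. The $p(s)$-versus-$p(s\mid a)$ subtlety you single out is precisely where the paper invokes the backdoor adjustment formula, identifying $\int P(S)\,P(\hat Y^{\acc}(S,A))\,\dif s$ with $P(\hat Y^{\acc};\mathrm{do}(A))$; your ``intervention semantics'' remark is the same move, so you may want to cite the backdoor formula explicitly there rather than leave it informal.
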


The intuition is that the \gls{EO} decision preserves the causal
relationship between the test score $A$ and the \gls{ML} decision
$\hat{Y}^{\acc}$, but it ignores the possible effect of the sensitive
attribute $S$.  In the \textit{do} notation, what this means is that
$P(\hat{Y}^{\eo} ; \textrm{do}(s, a)) = P(\hat{Y}^{\acc} ;
\textrm{do}(a))$ for all $s$. \Cref{eq:counterfactualeo} is the
adjustment formula in \citet{pearl2009causality}, which calculates
$P(\hat{Y}^{\acc} ; \textrm{do}(a))$.

Again return to \Cref{tab:admissions}.  We use the fitted logistic
regression $f_{\ML}(s, a)$ to produce the \gls{EO} predictor.  This
data has equal numbers of women and men, so the weighted average is
$$f_{\eo}(s, a) = 0.5 f_{\ML}(\textrm{male}, a) + 0.5
f_{\ML}(\textrm{female}, a).$$ Using \gls{EO}-fair decisions,
applicants A and B both have a $77\%$ probability of admission.

Why not use \gls{FTU}~\citep{kusner2017counterfactual}?  \gls{FTU}
satisfies \gls{EO} by fitting an \gls{ML} model from the non-sensitive
attribute $A$ to the decision $Y$.  \gls{FTU} decisions are also
$\gls{EO}$-fair; the sensitive attribute is never used.  However,
\Cref{thm:eothm} states that the \gls{EO} decisions of
\Cref{eq:counterfactualeo} are more accurate than \gls{FTU} while
still being $\gls{EO}$-fair.  We show this fact empirically in
\Cref{sec:empirical}.

\glsreset{AA}

\parhead{\gls{ML} decisions that exercise affirmative action.}  How
can we construct an algorithmic decision that exercises \gls{AA}?  We
now show how to correct the \gls{EO}-fair predictor to account for
historical disadvantages that stem from the sensitive attribute $S$.

Consider the \gls{EO}-fair predictor $f_{\eo}(a)$ and an applicant
with attributes $\{s_{\new}, a_\new\}$.  Their \gls{AA}-fair decision
is~$\hat{Y}^{\rmaa}~\sim~f_{\rmaa}(s_{\new},~a_{\new})$, where
\begin{align}
  \label{eq:counterfactualaa}
  f_{\rmaa}(s_{\new}, a_{\new}))
  &=
    \int \int
    f_{\eo}(a(s))\,\, p(a(s) \g s_{\new}, a_{\new}) \,\, p(s)
    \dif a(s) \dif s.
\end{align}
The distribution $p(s)$ is the population distribution of the
sensitive attribute; the distribution $p(a(s) \g s_{\new}, a_{\new})$
is the counterfactual distribution of the non-sensitive attribute $a$,
under intervention of $s$ and conditional on the observed values
$a_{\new}$ and $s_{\new}$.

This predictor $f_{\rmaa}(s, a)$ exercises affirmative action because
it corrects the applicant's test score (and their resulting admissions
decision) to their counterfactual test scores under intervention of
the sex.  Further, each element is computable from the dataset.  (See
below.)  Note that \gls{AA}-fair decisions can be formed from any
\gls{EO}-fair predictor.

One way to form an \gls{AA}-fair decision is to draw from the mixture
defined in \Cref{eq:counterfactualaa}.  Consider the admissions
example and an applicant $\{s_{\new}, a_{\new}\}$: (a) sample a sex
from the population distribution $s \sim p(s)$; (b) sample a test
score from its counterfactual distribution $a(s)\sim p(a(s) \g
s_{\new}, a_{\new})$; (c) sample the \gls{EO}-fair decision for that
counterfactual test score $\hat{y}^{\rmaa} \sim f_{\eo}(a(s))$.

One subtlety of $\hat{Y}^{\rmaa}$ is in step (b): it draws the
counterfactual test score $a$ under an intervened sex $s$, but
conditional on the observed sex and test score.  This is an
\textit{abduction}---if a female applicant $s_{\new}$ has a high test
score $a_{\new}$, then her counterfactual test score
$a(\textrm{male})$ will be even higher, and the likelihood of
admission goes up beyond the \gls{EO}-fair decision
\citep{pearl2009causality}.

Put differently, the \gls{AA}-fair predictor uses the \gls{EO}-fair
predictor $f_{\eo}(a)$, which only depends on the test score (and
averages over the sex). However, the \gls{AA}-fair predictor also
corrects for the bias in the test score due to the sex of the
applicant.  It replaces the current test score $a_{\new}$ with the
adjusted score $a(s) \g \{s_{\new}, a_{\new}\}$ under $s \sim
p(s)$. With this corrected score, it produces an \gls{AA}-fair
decision.

The \gls{AA}-fair predictor depends on the sensitive attribute, and
thus the \gls{AA}-fair decision $\hat{Y}^{\rmaa}$ is not
\gls{EO}-fair: its likelihood changes based on the sex of the
applicant.  But among all \gls{AA}-fair decisions, it is closest to
the \gls{EO}-fair decision.  The following theorem establishes the
  theoretical properties~of~$\hat{Y}^{\rmaa}$.

\begin{thm}[\gls{AA}-fairness and optimality of the \gls{AA} decisions
\label{thm:aathm}]
% \begin{enumerate}[leftmargin=*]
The decision $\hat{Y}^{\rmaa}\sim f_{\mathrm{eo}}(s,a)$ is
\gls{AA}-fair. Moreover, among all \gls{AA} decisions, the
\gls{AA} predictor minimally modifies the marginal distribution of
$Y^{\eo}$,
\begin{align*}
\hat{Y}^{\rmaa} =
\argmin_{Y^{\rmaa} \in\mathcal{Y}^{\rmaa}}
\mathrm{KL}(P(\hat{Y}^{\eo}(S,A))||P(Y^{\rmaa}(S,A))),
\end{align*}
where $\mathcal{Y}^{\rmaa}$ are all \gls{AA} decisions. It also
preserves the marginal distribution of the \gls{EO} predictor,
$P(\hat{Y}^{\rmaa}) = P(\hat{Y}^{\eo})$. (The proof is in
\Cref{sec:aathmpf}.)  % \end{enumerate}
\end{thm}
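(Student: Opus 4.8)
The plan is to prove the three claims in \Cref{thm:aathm} in the order (i) $\hat Y^{\rmaa}$ is \gls{AA}-fair, (ii) $\hat Y^{\rmaa}$ preserves the \gls{EO} marginal, and (iii) optimality, since (iii) will follow cheaply from (ii). For \gls{AA}-fairness I would verify \Cref{def:aacriterion} directly. Fix a factual individual $\{S=s, A=a\}$ and an arbitrary alternative group $s'$. By consistency, the left-hand side $\hat Y^{\rmaa}(s, A(s)) \g \{S=s,A=a\}$ is just the decision at the observed attributes, with law $f_{\rmaa}(s,a)$. The right-hand side draws the counterfactual score $A(s') \g \{s,a\}$ and then applies $f_{\rmaa}(s',\cdot)$, giving the mixture $\int f_{\rmaa}(s',a')\, p(a(s')=a' \g s, a)\, \dif a'$. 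Substituting the definition of $f_{\rmaa}$ from \Cref{eq:counterfactualaa} produces a \emph{nested} counterfactual: an inner counterfactual score under some $\tilde s$ conditioned on $(s', a')$, where $a'$ is itself the counterfactual of $(s,a)$ under $s'$.

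The key step is to collapse this nested counterfactual using the composition/consistency property of structural counterfactuals: abduction from $(s,a)$ recovers the same exogenous noise as abduction from $(s',A(s'))$, so that $\int p(\tilde a(\tilde s) = \tilde a \g s', a')\, p(a(s')=a' \g s, a)\, \dif a' = p(\tilde a(\tilde s) = \tilde a \g s, a)$. After this collapse, the right-hand side becomes $\int\int f_{\eo}(\tilde a)\, p(\tilde a(\tilde s)=\tilde a \g s, a)\, p(\tilde s)\, \dif \tilde a\, \dif \tilde s$, which no longer depends on $s'$. Hence every alternative $s'$ induces the same law, and setting $s'=s$ shows it also matches the left-hand side, establishing \Cref{eq:aacriterion}.

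For the marginal-preservation claim $P(\hat Y^{\rmaa}) = P(\hat Y^{\eo})$, I would integrate the conditional law $P(\hat Y^{\rmaa} \g s_{\new}, a_{\new})$ against the population $p(s_{\new}, a_{\new})$ and exchange the order of integration. The inner factor $\int\int p(a(s)=a' \g s_{\new}, a_{\new})\, p(s_{\new}, a_{\new})\, \dif s_{\new}\, \dif a_{\new}$ is exactly the interventional score distribution $P(A=a'\s \textrm{do}(S=s))$. Under the no-open-backdoor assumption on the causal model of \Cref{fig:eoaagraph}, this equals $p(a'\g s)$, so averaging over $p(s)$ recovers the factual score marginal $p(a')$. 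This collapses the whole expression to $\int f_{\eo}(a')\, p(a')\, \dif a'$, which is precisely the marginal of the \gls{EO} decision; structurally this is the adjustment formula applied to the score rather than to the decision.

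Finally, optimality is immediate from the previous two steps. The objective $\mathrm{KL}(P(\hat Y^{\eo}(S,A)) \| P(Y^{\rmaa}(S,A)))$ is a $\mathrm{KL}$ divergence between decision marginals, hence nonnegative and zero iff the two marginals coincide. Step (ii) shows $\hat Y^{\rmaa}$ attains $P(\hat Y^{\rmaa}) = P(\hat Y^{\eo})$ while step (i) shows it lies in the feasible set $\mathcal{Y}^{\rmaa}$, so it achieves the global minimum value zero and is therefore a minimizer. I expect the main obstacle to be step (i): justifying the collapse of the nested counterfactual rigorously, which requires invoking the composition axiom of the structural causal model (equivalently, that abduction along the counterfactual chain recovers a single consistent setting of the exogenous noise) rather than treating the inner and outer counterfactual distributions as independent integrals.
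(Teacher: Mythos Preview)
Your proposal is correct and follows essentially the same route as the paper: the paper also collapses the nested counterfactual in step (i) via the structural equation $A=f(S,\epsilon)$ (exactly your composition/abduction argument), establishes $P(\hat Y^{\rmaa})=P(\hat Y^{\eo})$ by reducing the averaged counterfactual score distribution to the factual marginal $p(a')$, and then observes that the \gls{KL} attains zero. The only addition in the paper is a separate derivation of demographic parity $\hat Y^{\rmaa}\perp S$, which is not part of the theorem statement.
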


In \Cref{thm:aathm}, we also prove that \gls{AA}-fair decisions
preserve the marginal distribution of the \gls{EO}-fair
decisions. This property makes the \gls{AA} predictor applicable as a
decision policy. If the \gls{EO}-fair predictor admits 20\% of the
applicants, the \gls{AA}-fair predictor will also admit 20\%.  We need
not worry that we admit more applicants than we can.

To finish the running example in \Cref{tab:admissions}, we exercise
\gls{AA} and calculate $f_{\rmaa}(s, a)$ from $f_{\eo}(s, a)$.  This
is the predictor that requires abduction, calculating the test score
that the applicant would have gotten had their sex been different.
Consider applicant C.  Her \gls{EO}-fair probability of acceptance is
$69\%$, but when exercising \gls{AA} it increases to $70\%$.  This
adjustment corrects for the (simulated) systemic difficulty of females
to receive test preparation and, consequently, higher scores.

% !TEX root = fair_dcf.tex

\newenvironment{rcases}
  {\left.\begin{aligned}}
  {\end{aligned}\right\rbrace}

\parhead{Calculating \gls{EO} and \gls{AA} predictors.}
\Cref{alg:eoaa} provides the algorithm for calculating \gls{EO} and
\gls{AA} predictors. (We prove its correctness in
\Cref{sec:algcorrect}.) The \gls{EO} predictor adjusts the ML
predictor $f_{\acc}$; it uses the ML predictor, but marginalizes out
the sensitive attribute. The \gls{AA} predictor adjusts the \gls{EO}
predictor $f_{\eo}$; it uses an abduction step
\citep{pearl2009causality} to impute $a(s')$ from the model $g(\cdot)$
and its residual $a-g(s)$. In admissions, for example, the abduction
step infers what a female applicant's test score would have been if
she had been male, given the score that she did achieve.

\Cref{alg:eoaa} differs from existing \gls{EO} and \gls{AA} algorithms
in that it uses all available attributes. Recent works like
\citet{kusner2017counterfactual} and \citet{kilbertus2017avoiding}
avoid using descendants of the sensitive attributes to construct
\gls{AA}-fair decisions.
%For example,
%FairLearning~\citep{kusner2017counterfactual} only uses background
%variables and non-descendants of the sensitive attributes.
FairLearning~\citep{kusner2017counterfactual} performs the same
abduction step as $f_{\rmaa}$ to compute residuals
$\varepsilon_i = a_i - \E{}{A\g S=s_i}$ but then fits a predictor
using only the residuals and non-descendants of the sensitive
attributes.  For example, in admissions, after calculating the
residual it does not include the test score in its predictor.  In
contrast, \Cref{alg:eoaa} uses all available attributes and still is
\gls{AA}-fair. The \gls{EO} predictor in \Cref{alg:eoaa} also makes
use of all attributes; it does not omit sensitive attributes as done
in \gls{FTU}~\citep{kusner2017counterfactual}. In \Cref{sec:empirical}
we demonstrate empirically that \Cref{alg:eoaa} forms more accurate
decisions than these existing \gls{EO} and \gls{AA} algorithms.

\begin{algorithm}[t]
  % \setstretch{1.10}
  \DontPrintSemicolon \; \KwIn{Data $\mathcal{D}=\{(s_i, a_i,
  y_i)\}_{i=1}^{n}$, where $s_i$ is sensitive, $a_i$ is not, and $y_i$
  is the decision.}

  \BlankLine

  \KwOut{Predictors $\{f_{\acc}(s, a), f_{\eo}(a), f_{\rmaa}(s, a)\}$}

  \BlankLine

  From the data $\mathcal{D}$, fit $f_{\acc}(s, a)$, $p(s)$, and
  $g(s)=\E{}{A\g S=s}$ (e.g., with regression).
%  \begin{itemize}
%  \item $f_{\acc}(s, a)$ (e.g., logistic regression)
%  \item $p(s)$ (e.g., the empirical distribution)
%  \item $g(s)=\E{}{A\g S=s}$ (e.g., regression)
%  \end{itemize}

  \BlankLine

  \circled{1} \quad The \gls{ML} predictor $f_{\ML}(s,a)$ draws from
  $
    \hat{y}^{\acc} \sim f_{\acc}(s, a).
    $

  \circled{2} \quad The \gls{EO} predictor $f_{\eo}(a)$ draws from \Cref{eq:counterfactualeo},
  \begin{align*}
    s' \sim p(s); \qquad \hat{y}^{\eo} \sim f_{\acc}(s',a).
  \end{align*}

  \circled{3} \quad The \gls{AA} predictor $f_{\rmaa}(s,a)$ draws from
  \Cref{eq:counterfactualaa},
  \begin{align*}
    s' \sim p(s); \qquad a' = g(s') + (a-g(s)); \qquad
    \hat{y}^{\rmaa}&\sim f_{\eo}(s',a).
  \end{align*}
  \caption{The \gls{EO} and \gls{AA} predictors.}
  \label{alg:eoaa}
\end{algorithm}

\parhead{Multiple sensitive attributes and general causal graphs.}
The admissions example involves one sensitive attribute, one
non-sensitive attribute, and a binary decision.  However,
\gls{EO}-fair and \gls{AA}-fair decisions directly apply to cases with
multiple sensitive (and non-sensitive) attributes, non-binary
decisions, and more general causal structures.

Notably, \gls{AA}-fair decisions can correct for bias when an
individual belongs to an advantaged group in one sensitive attribute
and a disadvantaged one in another. Consider both race and gender as
sensitive attributes. If an applicant comes from an advantaged racial
group but a disadvantaged gender group, the \gls{AA}-fair decision
would consider the counterfactual test score averaging over both
racial and gender groups. It corrects for both the positive racial
bias and the negative gender bias.

The \gls{EO}-fair and \gls{AA}-fair decisions also extend to settings
where the bias in some attributes need not be corrected. For example,
in affirmative action we might correct for the effect of the
applicant's sex on their test scores, but allow applicants of
different sex to differ in their interests and other abilities. The
\gls{AA}-fair predictor can calculate counterfactual $A(s)$ on only
those attributes we want to correct.

%%% Local Variables:
%%% mode: latex
%%% TeX-master: "fair_dcf"
%%% End:

% !TEX root = fair_dcf.tex

\section{Empirical studies}
\label{sec:empirical}

\glsreset{FTU}

We study algorithmic decisions on simulated and real datasets.  We
examine the fairness/accuracy trade-off of the \gls{EO} and \gls{AA}
algorithms presented here, and compare to existing algorithms that
target the same \gls{EO} and \gls{AA} criterion.  (The supplement
provides software that reproduces these studies.)

We find the following: 1) the \gls{ML} decision $\hat{Y}^{\acc}$ is
accurate but unfair; 2) The \gls{EO} decision $\hat{Y}^{\eo}$ is less
accurate than $\hat{Y}^{\acc}$, but is \gls{EO}-fair; 3) The \gls{AA}
decision $\hat{Y}^{\rmaa}$ is less accurate than the \gls{EO}-fair
decision, but is \gls{AA}-fair and achieves demographic parity; 4) The
\gls{FTU} decision $\hat{Y}^{\textrm{FTU}}$ is \gls{EO}-fair but less
accurate than $\hat{Y}^{\eo}$; 5) The FairLearning decision
$\hat{Y}^{\textrm{FL}}$ is \gls{AA}-fair and achieves demographic
parity, but less accurate than $\hat{Y}^{\rmaa}$.

\parhead{Methods and metrics.}  Each problem consists of training and
test sets, potentially with multiple sensitive and non-sensitive
attributes.  We use \Cref{alg:eoaa} to fit \gls{ML}, \gls{EO} and
\gls{AA} predictors with training data. We also use \gls{FTU} and
FairLearning, both from~\citet{kusner2017counterfactual}.  The
\gls{FTU} predictor follows classical \gls{ML} but omits the sensitive
attribute; FairLearning is described above.

With these predictors, we make algorithmic decisions $\hat{y}$ about
the $n$ test set units.  We evaluate the accuracy of these decisions relative to
the ground truth and we assess their fairness.  For a sensitive
attribute with values ``adv'' (for the advantaged group) and ``dis''
(for the disadvantaged group), we assess fairness with the following
metrics:
\begin{eqnarray*}
%\textrm{Accuracy} &=& \frac{1}{n} \sum_{i=1}^{n} L(y_i, \hat{y}_i) \\
\textrm{\gls{EO} metric} &=& \frac{1}{n} \sum_{i=1}^{n} \left[\E{}{\hat{y}(\textrm{adv}, a_i) \g a_i, s_i} - \E{}{\hat{y}(\textrm{dis}, a_i) \g a_i, s_i}\right], \\
\textrm{\gls{AA} metric} &=& \frac{1}{n} \sum_{i=1}^{n} \left[\E{}{\hat{y}(\textrm{adv}, a(\textrm{adv})) \g a_i, s_i} - \E{}{\hat{y}(\textrm{dis}, a(\textrm{dis})) \g a_i, s_i}\right].
\end{eqnarray*}
When the \gls{EO} metric is close to 0.0, we achieve
\gls{EO}-fairness. Less than 0.0 indicates a bias towards the
disadvantaged group. More than 0.0 indicates a bias towards the
advantaged group. The same interpretation applies to the \gls{AA}
metric.

\parhead{Simulation studies.} We first study datasets that simulate
the unfair admissions committee from \Cref{eq:admissions-model}.  We
fix the effect of test score on admissions to $\beta_a = 2.0$.  We
generate multiple datasets by varying the committee's bias due to sex
$\beta_s$ and the historical disadvantage on test score $\lambda$.

\Cref{fig:varybetaS} show how
\gls{EO}-fairness and predictive accuracy trade off as the bias
$\beta_s$ increases.  Only the \gls{EO} predictor and \gls{FTU}
achieve \gls{EO} fairness. Although both are less accurate than
classical \gls{ML}, the \gls{EO} predictor achieves greater accuracy than
\gls{FTU}.  \Cref{fig:varybetaS} show
how \gls{AA}-fairness and predictive accuracy trade off as the
historical disadvantage $\lambda$ increases.  Only \gls{AA} and
FairLearning achieve \gls{AA}-fairness.  Among these \gls{AA}-fair
predictors, the \gls{AA} predictor is more accurate.

\parhead{Case studies.} We study the fair algorithms on three real
datasets: 1) Adult income for predicting whether individuals' income
is higher than \$50K, 2) ProPublica's COMPAS for predicting recidivism
scores 3) German credit for predicting whether individuals have good
or bad credit. For Adult and COMPAS, both gender and race are
sensitive attributes. For German credit, gender and marital status are
sensitive attributes.
Decisions are binary except for real-valued COMPAS recidivism scores.

\begin{figure}[t]
		\begin{minipage}{.5\textwidth}
		\centering
		\captionsetup{width=.9\linewidth}
		\includegraphics[scale=0.6]{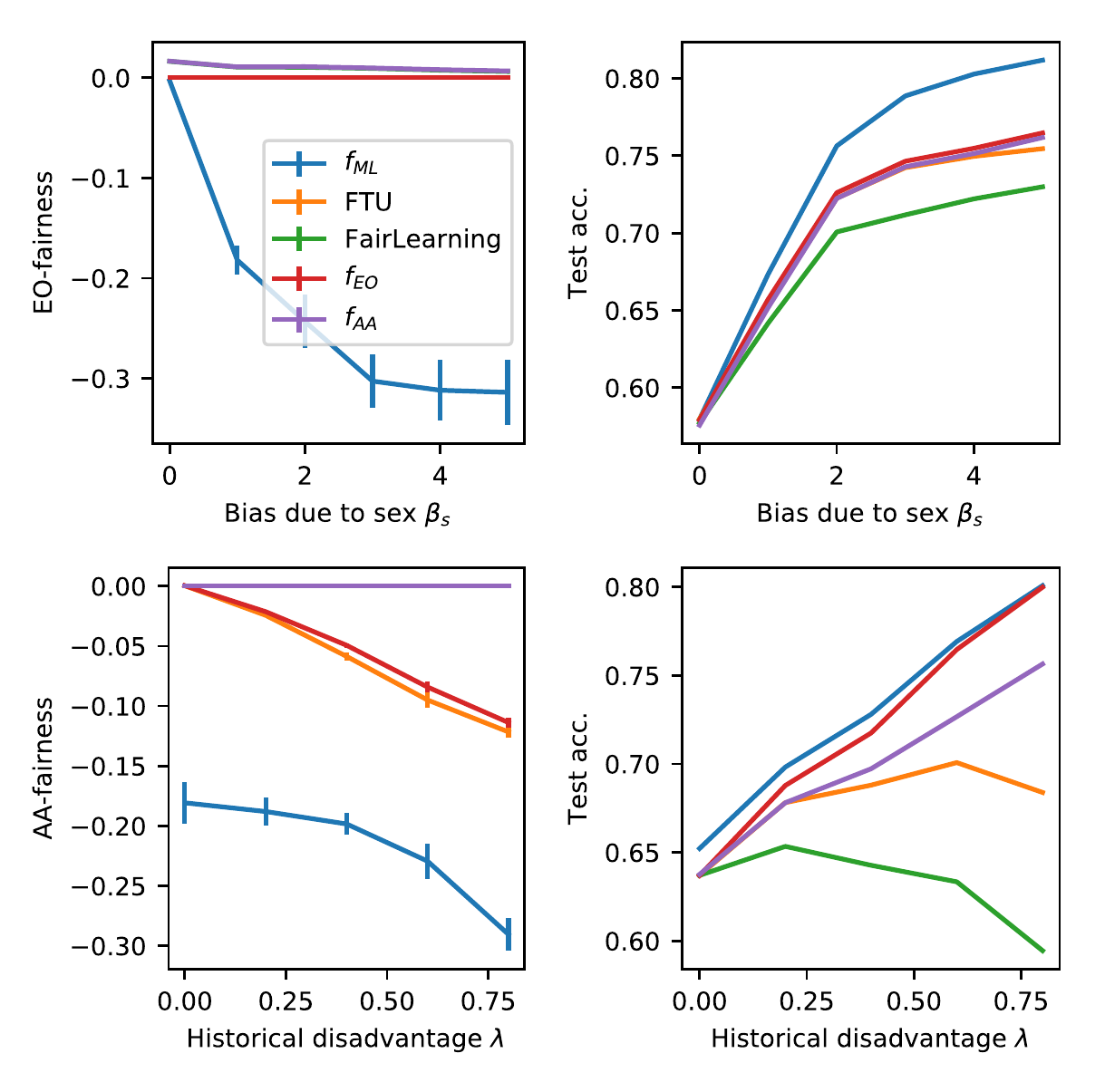}
	\end{minipage}
	\begin{minipage}{.5\textwidth}
		\centering
		\captionsetup{width=.9\linewidth}
		% !TEX root = ../fair_dcf.tex

%\begin{table}
  \footnotesize
  \begin{center}
    \begin{tabular}{lcccc} 
      \toprule
      &\multicolumn{4}{c}{\textbf{Metrics ($\times 10^{2}$) on Adult}} \\
      &\textbf{\gls{EO}} & \textbf{\gls{AA}} & \textbf{KL} & \textbf{Prediction} \\
      \midrule
       $f_{\acc}$ & 2.5 & 16.2 & 15.0 & 78.6 \\
      \midrule
      \gls{FTU} & \textbf{0} & 14.8 & 12.2 & 77.3\\
     $f_\eo$ & \textbf{0} & 14.1 & 12.6 & \textbf{77.4}\\
      \midrule
      FL  & -14.8 & \textbf{0}& 5.3 & 75.1\\
      $f_\rmaa$ & -9.1 & \textbf{0} & \textbf{1.5} & \textbf{77.1} \\
      \bottomrule
    \end{tabular}
  \end{center}

%  \caption{Both the \gls{EO} predictor $f_\eo$ and \gls{FTU} are \gls{EO}-fair.
%%  	; they achieve zero in the \gls{EO} metric (lower is better).
%    The \gls{AA} predictor $f_\rmaa$ and FairLearning
%    \citep{kusner2017counterfactual} are \gls{AA}-fair and achieve demographic parity (close-to-zero KL).
%%    ;they achieve zero in the \gls{AA} metric. (Lower is better.) 
%%    The \gls{AA} predictor $f_\rmaa$ achieves demographic parity;
%%    it has close-to-zero \gls{KL} divergence. (Lower is
%%    better.) 
%    The ML predictor $f_\ML$ predicts best overall but the
%    \gls{EO} predictor predicts best among the \gls{EO}-fair
%    predictors. 
%%    (Higher prediction scores are better.) 
%	We report mean values across individuals. \gls{EO} and
%    \gls{AA} metric standard deviations are $\leq 0.1$ and
%    $\leq 0.11$, respectively.
%    \label{tab:adultmetrics}}
%\end{table}

%%% Local Variables:
%%% mode: latex
%%% TeX-master: "../fair_dcf"
%%% End:

		  \caption{Both the \gls{EO} predictor $f_\eo$ and \gls{FTU} are \gls{EO}-fair.
			The \gls{AA} predictor $f_\rmaa$ and FairLearning (FL)
			\citep{kusner2017counterfactual} are \gls{AA}-fair and achieve demographic parity (close-to-zero KL).
			The \gls{ML} predictor $f_{\acc}$ predicts best overall but the
			\gls{EO} predictor predicts best among the \gls{EO}-fair
			predictors. 
			We report mean values across individuals. \gls{EO} and
			\gls{AA} metric standard deviations are $\leq 0.1$ and
			$\leq 0.11$, respectively.
			\label{tab:adultmetrics}}
	\end{minipage}
        \caption{Varying $\beta_{s}$ from $0.0$ to $+5.0$; varying
          $\lambda$ from $0.0$ to $+0.8$; error bars indicate +/- 1
          sd. $f_{\acc}$ is not fair but predicts best; $f_\eo$ is the
          best \gls{EO}-fair predictor; $f_\rmaa$ is the best
          \gls{AA}-fair predictor.) \label{fig:varybetaS}}
      \end{figure}

\Cref{tab:adultmetrics} reports results on the Adult dataset.  (In
\Cref{sec:empiricalsupp}, we report results on the COMPAS and German
credit datasets, and provide all the details needed to reproduce our
studies.) The table reports accuracy, \gls{EO}, and \gls{AA}. Further,
to measure demographic parity, it reports the KL divergence between
decisions $p(\hat{y}(s, a))$ and $p(\hat{y}(s', a))$; when there is
demographic parity, the KL is close~to~zero.

The findings are consistent with the simulation studies. The \gls{EO}
and \gls{FTU} predictors achieve \gls{EO}-fairness and, among these,
the \gls{EO} predictor is more accurate. The \gls{AA} predictor and
FairLearning achieve \gls{AA}-fairness and demographic parity; but the
\gls{AA} predictor is more accurate because it uses all of the
attributes.  While the \gls{ML} predictor has the best accuracy, it is less
\gls{EO}-fair and \gls{AA}-fair.

%%% Local Variables:
%%% mode: latex
%%% TeX-master: "fair_dcf"
%%% End:

% !TEX root = fair_dcf.tex
\section{Discussion}
\label{sec:discussion}

We develop fair ML algorithms that modify fitted ML predictors to make
them fair.  We prove that the resulting predictors are \gls{EO}-fair
or \gls{AA}-fair, and that they otherwise maximally recover the fitted
ML predictor.  We show empirically that the \gls{EO} and \gls{AA}
predictors produce better predictions than existing algorithms for
satisfying the same fairness criteria.

The \gls{EO} and \gls{AA} predictors both aim to recover the ML
outcome as much as possible. However, they can also readily extend to
recover the ground truth outcome subject to availability of data. For
example, instead of predicting the admissions decision accurately, we
can aim the \gls{EO} and \gls{AA} predictors for the ground truth
outcome, i.e. first year GPA. They will lead to fair and accurate
predictions of whether the applicants' succeed in college.

\clearpage
\putbib[BIB1]
\end{bibunit}

\clearpage
\begin{bibunit}[alp]
{\onecolumn
% !TEX root = fair_dcf.tex

\appendix
\onecolumn
{\Large\textbf{Appendix}}

\section{Proof of \Cref{thm:eothm}}
\label{sec:eothmpf}

Before proving the \gls{EO}-fairness and optimality of the
\gls{EO} predictor, we first establish a lemma about
\gls{EO}-fairness. It says the \gls{EO}-fairness of a decision
$\hat{Y}$ is equivalent to no causal arrow from $S_i$ to $\hat{Y}$.

\begin{lemma} (\gls{EO} $\Leftrightarrow$ No $(S\rightarrow \hat{Y})$)
\label{lemma:EOequiv} Assume the causal graph in \Cref{fig:eoaagraph}. A
decision $\hat{Y}$ satisfies equal opportunities over
$S$ if and only if there is no causal arrow between $S$ and $\hat{Y}$.
\end{lemma}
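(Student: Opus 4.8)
The plan is to prove the biconditional by analyzing how the counterfactual distributions in the EO criterion (Definition~\ref{def:eocriterion}) behave under the structural causal model of Figure~\ref{fig:eoaagraph}. The key observation is that in this graph, the only parents of $\hat{Y}$ are (potentially) $S$ and $A$, while $A$ has $S$ as a parent. The EO criterion requires $Y(s,a) \mid \{S=s,A=a\} \stackrel{d}{=} Y(s',a) \mid \{S=s,A=a\}$, i.e. intervening on $S$ alone (holding $A$ fixed at $a$) leaves the decision distribution unchanged. Since we hold $A=a$ fixed in both counterfactuals, the only channel through which changing $s$ to $s'$ could affect $\hat{Y}$ is a direct arrow $S \to \hat{Y}$; the indirect path $S \to A \to \hat{Y}$ is blocked because $A$ is clamped at $a$.

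First I would prove the ``$\Leftarrow$'' direction. Assume there is no arrow $S \to \hat{Y}$. Then in the structural equations, $\hat{Y}$ is a function of $A$ and an exogenous noise variable $U_{\hat{Y}}$ only: $\hat{Y} = h(A, U_{\hat{Y}})$. Under the intervention $\mathrm{do}(S=s, A=a)$, the value of $\hat{Y}$ is $h(a, U_{\hat{Y}})$, which has no dependence on $s$. Conditioning on the factual event $\{S=s,A=a\}$ only constrains the exogenous terms feeding into $S$ and $A$; but since $U_{\hat{Y}}$ is independent of those terms (standard SCM independence of exogenous noise), the conditional distribution of $h(a,U_{\hat{Y}})$ is the same whether we write $Y(s,a)$ or $Y(s',a)$. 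Hence the two counterfactual distributions coincide and EO holds.

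Next I would prove the contrapositive of ``$\Rightarrow$'': if there \emph{is} an arrow $S \to \hat{Y}$, I would exhibit a failure of EO. With the arrow present, $\hat{Y} = h(S, A, U_{\hat{Y}})$ genuinely depends on its first argument, so there exist $s \neq s'$ and some $a$ with $h(s,a,\cdot)$ and $h(s',a,\cdot)$ inducing different distributions on $\hat{Y}$. Under $\mathrm{do}(S=s,A=a)$ versus $\mathrm{do}(S=s',A=a)$ the decision distributions therefore differ, and conditioning on the factual $\{S=s,A=a\}$ does not remove this difference (again because $U_{\hat{Y}}$ is independent of the factual evidence on $S,A$). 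This violates Equation~\eqref{eq:equalopportunities}.

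\textbf{The main obstacle} I expect is making the ``no arrow $\Leftrightarrow$ no distributional dependence'' equivalence fully rigorous rather than merely intuitive: a direct arrow $S \to \hat{Y}$ in the graph guarantees the \emph{possibility} of dependence but not that every parameterization exhibits it. I would handle this by adopting the standard convention (implicit in the paper's use of structural models) that an edge is present precisely when the structural function depends non-trivially on that argument, so that ``arrow present'' is definitionally equivalent to ``$\hat{Y}$'s mechanism is a genuine function of $S$.'' The careful bookkeeping is then showing that the conditioning event $\{S=s,A=a\}$, which performs abduction on the exogenous noise, does not reintroduce a dependence on $s$ into $\hat{Y}$'s distribution when no arrow is present — this rests on the independence of $U_{\hat{Y}}$ from the exogenous variables determining $S$ and $A$, which is exactly the acyclicity/autonomy assumption of the SCM.
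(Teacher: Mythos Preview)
Your proposal is correct and follows essentially the same reasoning as the paper's proof. The paper compresses your argument by invoking the single fact $\hat{Y}(s,a)\perp S,A$ (when both parents are intervened, the counterfactual is independent of the factual observations) to drop the conditioning in one step, whereas you unpack this same independence by tracing it back to the exogenous noise $U_{\hat Y}$ being independent of the noise terms for $S$ and $A$; the content is identical, and your explicit treatment of the edge-present-iff-genuine-dependence convention is a point the paper leaves implicit.
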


Consider a decision $\hat{Y}(s, a)$ in the causal
graphical model \Cref{fig:yhat}. The reason behind
\Cref{lemma:EOequiv} is that $\hat{Y}(s,a)\perp S, A$ in
\Cref{fig:yhat} \citep{pearl2009causality}. So \gls{EO} reduces to
$\hat{Y}(s,a)$ being constant in $s$.

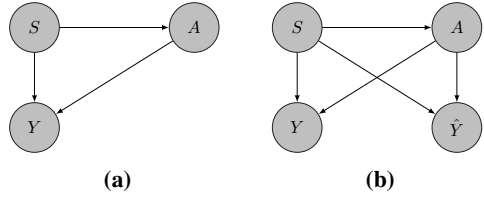
\begin{figure}[ht]
\centering
\begin{subfigure}[b]{0.25\textwidth}
\centering
% !TEX root = ../fair_dcf.tex
\begin{adjustbox}{height=2cm}
\begin{tikzpicture}
% \tikzstyle{plate} = [draw, rectangle, rounded corners, fit=#1, dashed]
[obs/.style={draw,circle,filled,minimum size=1cm}, latent/.style={draw,circle,minimum size=1cm}]
  % Define nodes
  \node[obs]                               (t1) {$A$};
  \node[obs, left=of t1, xshift=-1.2cm]                               (t2) {$S$};
  \node[obs, below=of t2]            (y) {$Y$};  
  % \node[obs, below=of t1]            (yeo) {$\hat{Y}^{eo}$};
  % \node[obs, below=of yeo]            (yaa) {$\hat{Y}^{aa}$};
  % Connect the nodes
  \edge {t2} {y, t1} ; 
  \edge {t1} {y} ;
  % \edge {yeo} {yaa};
  % \edge {t2} {yaa};

\end{tikzpicture}
\end{adjustbox}
\caption{\label{fig:setupmain}}
\end{subfigure}%
\begin{subfigure}[b]{0.25\textwidth}
\centering
% !TEX root = ../fair_dcf.tex
\begin{adjustbox}{height=2cm}
\begin{tikzpicture}
% \tikzstyle{plate} = [draw, rectangle, rounded corners, fit=#1, dashed]
[obs/.style={draw,circle,filled,minimum size=1cm}, latent/.style={draw,circle,minimum size=1cm}]
  % Define nodes
  \node[obs]                               (t1) {$A$};
  \node[obs, left=of t1, xshift=-1.2cm]                               (t2) {$S$};
  \node[obs, below=of t2]            (y) {$Y$};  
  \node[obs, below=of t1]            (yeo) {$\hat{Y}$};
  % Connect the nodes
  \edge {t2} {y, t1, yeo} ; 
  \edge {t1} {y, yeo} ;

\end{tikzpicture}
\end{adjustbox}
\caption{\label{fig:yhat}}
\end{subfigure}
\caption{The causal graph for the sensitive attribute $S$, the
	 attribute $A$, the past decisions $Y$, and the
predicted decisions $\hat{Y}$.}
\end{figure}

\Cref{lemma:EOequiv} shows that the decision $\hat{Y}$ is
\gls{EO}-fair as long as $\hat{Y}$ does not predict using the
sensitive attribute.

\begin{proof}
The goal is to show that counterfactual fairness is equivalent to
\begin{align}
\hat{Y}^{\mathrm{eo}}(s,a) = \hat{Y}^{\mathrm{eo}}(a)  
\end{align} 
for any $a\in \mathcal{A}, s, s'\in \mathcal{S}$. This equation is
equivalent to no causal arrow between $S$ and $\hat{Y}_{\mathrm{eo}}$. 

We start with the definition of \gls{EO}-fairness.
\begin{align}
P(\hat{Y}(s', a)\g S=s, A=a,) &= P(\hat{Y}(s, a)\g S=s, A=a)\\
\Leftrightarrow P(\hat{Y}(s, a')) &= P(\hat{Y}(s, a)) \qquad \forall a, s, s'\label{eq:keyeo}\\
\Leftrightarrow P(\hat{Y}(s', a)) &= P(\hat{Y}(a))\label{eq:laststep}
\end{align}

\Cref{eq:keyeo} is due to the observation that $\hat{Y}(s,a)\perp
A,S$. \Cref{eq:laststep} is true because 
\begin{align}
\hat{Y}(a) = \int \hat{Y}(s, a) p_S(s)\dif s = \hat{Y}(s', a)
\end{align}
for any $s'$. The last equation is equivalent to no causal arrow from
$S$ to $\hat{Y}$.

\end{proof}

Now we are ready to prove \Cref{thm:eothm}.
\begin{proof}
As a direct application of \Cref{lemma:EOequiv}, we prove the first
part of \Cref{thm:eothm}: $\hat{Y}^{\mathrm{\mathrm{eo}}}$ is
\gls{EO}-fair. The reason is that
$\hat{Y}^{\mathrm{\mathrm{eo}}}(s_{\mathrm{new}}, a_{\mathrm{new}}) =
\hat{Y}^{\mathrm{\mathrm{eo}}}(a_{\mathrm{new}})$ as is defined in
\Cref{eq:counterfactualeo}.

We then prove the second part of \Cref{thm:eothm}. It establishes the
optimality of $\hat{Y}^{\mathrm{\mathrm{eo}}}$.

We start with rewriting the goal of the proof. We will show this goal is equivalent to the definition of the \gls{EO} predictor.

\begin{align}
\hat{Y}^{\mathrm{\mathrm{eo}}}=&\argmin_{Y^{\rm{\mathrm{eo}}}\in\mathcal{Y}^{\rm{\mathrm{eo}}}} \E{S,A}{\mathrm{KL}(P(\hat{Y}^{\mathrm{ml}}(S,A))||P(Y^{\rm{\mathrm{eo}}}(S,A))}\label{eq: step1}\\
=& \argmin_{Y^{\rm{\mathrm{eo}}}\in\mathcal{Y}^{\rm{\mathrm{eo}}}} \E{A}{\int P(S)\int P(\hat{Y}^{\mathrm{ml}}(S,A))\left[\log P(\hat{Y}^{\mathrm{ml}}(S,A)) - \log P(Y^{\rm{\mathrm{eo}}}(S,A))\right]}\dif y\dif s \label{eq: step2}\\
= & \argmin_{Y^{\rm{\mathrm{eo}}}\in\mathcal{Y}^{\rm{\mathrm{eo}}}} - \E{A}{\int P(S)\int P(\hat{Y}^{\mathrm{ml}}(S,A)) \log P(Y^{\rm{\mathrm{eo}}}(S,A))\dif y\dif s}\label{eq: step3}\\
= & \argmin_{Y^{\rm{\mathrm{eo}}}\in\mathcal{Y}^{\rm{\mathrm{eo}}}} - \E{A}{\int P(S)\int P(\hat{Y}^{\mathrm{ml}}(S,A)) \log P(Y^{\rm{\mathrm{eo}}}(A))\dif y\dif s}\label{eq: step4}\\
= & \argmin_{Y^{\rm{\mathrm{eo}}}\in\mathcal{Y}^{\rm{\mathrm{eo}}}} - \E{A}{\int \left[\int P(S)P(\hat{Y}^{\mathrm{ml}}(S,A)) \dif s\right]\log P(Y^{\rm{\mathrm{eo}}}(A))\dif y}\label{eq: step5}\\
= & \argmin_{Y^{\rm{\mathrm{eo}}}\in\mathcal{Y}^{\rm{\mathrm{eo}}}} - \E{A}{\int P(\hat{Y}^{\mathrm{ml}}\s \rm{do}(A=A))\log P(Y^{\rm{\mathrm{eo}}}(A))\dif y}\label{eq: step6}\\
=& \argmin_{Y^{\rm{\mathrm{eo}}}\in\mathcal{Y}^{\rm{\mathrm{eo}}}} \E{A}{\mathrm{KL}(P(\hat{Y}^{\mathrm{ml}}\s \rm{do}(A=A))|| P(Y^{\rm{\mathrm{eo}}}(A))}\label{eq: step7}\\
= & P(\hat{Y}^{\mathrm{ml}}\s \rm{do}(a))\label{eq: step8}\\
=& \int p(\hat{y}^{\mathrm{ml}}(s,a)) p(s)\dif s \label{eq: step9}
\end{align}

\Cref{eq: step1} is the goal of the proof. \Cref{eq: step2} is due to
the definition of the \gls{KL} divergence. \Cref{eq: step3} is due to
$\hat{Y}^{\mathrm{acc}}$ being a given random variable. \Cref{eq:
step4} is due to \Cref{lemma:EOequiv}. \Cref{eq: step5} switches the
integral subject to conditions of the dominated convergence theorem.
\Cref{eq: step6} is due to \Cref{fig:yhat} and the backdoor adjustment
formula of \citet{pearl2009causality}. \Cref{eq: step7} is due to the
definition of the \gls{KL} divergence and $\hat{Y}^{\mathrm{ml}}$
being given. \Cref{eq: step8} is because setting $P(Y^{\mathrm{eo}}(a)) =
P(\hat{Y}^{\mathrm{ml}}\s \rm{do}(a))$ has
$\mathrm{KL}(P(\hat{Y}^{\mathrm{ml}}\s \rm{do}(A=a))||
P(Y^{\rm{\mathrm{eo}}}(a)) = 0$ for all $a$. The expecation is hence also zero:
$\E{A}{\mathrm{KL}(P(\hat{Y}^{\mathrm{ml}}\s \rm{do}(A=A))||
P(Y^{\rm{\mathrm{eo}}}(A))} = 0$.  \Cref{eq: step9} is due to the definition of
the intervention distribution \citep{pearl2009causality}.

This calculation implies $\hat{Y}^{\rm{\mathrm{eo}}} =  \int
p(\hat{y}^{\mathrm{ml}}(s, a)) p(s)\dif s$ minimizes the average
\gls{KL} between the ML decision and the \gls{EO} decision $\E{A,
S}{\mathrm{KL}(P(\hat{Y}^{\mathrm{ml}}(S,A))||P(\hat{Y}^{\rm{\mathrm{eo}}}(S,A))}$.
the \gls{EO} predictor maximally recovers the ML decision. Put
differently, the \gls{EO} predictor minimally modifies the ML decision
to achieve \gls{EO}-fairness.

\end{proof}

\section{Proof of \Cref{thm:aathm}}

\label{sec:aathmpf}

\begin{proof}

We first prove that the \gls{AA} predictor is \gls{AA}-fair.

Recall the definition of the \gls{AA} predictor: 
\begin{align}
p(\hat{y}^{\mathrm{aa}}(s_{\mathrm{new}}, a_{\mathrm{new}})) = 
      \int \int
      p(\hat{y}^{\mathrm{\mathrm{eo}}}(a(s^0)))\,\, p(a(s^0) \g s_{\mathrm{new}}, a_{\mathrm{new}}) \,\, p(s^0) \dif a(s^0)
      \dif s^0.
\end{align}

Note the above equation is the same definition as in
\Cref{eq:counterfactualaa}. We simply change $s$ to $s^0$ for
downstream notation convenience.

This implies
\begin{align}
p(\hat{y}^{\mathrm{aa}}(s', a(s'))) = \int \int
      p(\hat{y}^{\mathrm{\mathrm{eo}}}(a(s^0)))\,\, p(a(s^0) \g s', a(s')) \,\, p(s^0) \dif a(s^0)
      \dif s^0.
\end{align}

Hence we have
\begin{align}
&p(\hat{y}^{\mathrm{aa}}(s', a(s'))\g S=s, A=a) \\
=& \int \int
      p(\hat{y}^{\mathrm{\mathrm{eo}}}(a(s^0)))\,\, p(a(s^0) \g s, a(s')') \,\, p(s^0) p(a(s')\g s, a)
      \dif a(s')
      \dif a(s^0)
      \dif s^0\\
=&\int \int
      p(\hat{y}^{\mathrm{\mathrm{eo}}}(a(s^0)))\,\, p(a(s^0) \g s, a) \,\, p(s^0) 
      \dif a(s^0)
      \dif s^0.\label{eq:aastep}
\end{align}
\Cref{eq:aastep} is due to $\int p(a(s^0) \g s', a(s'))p(a(s')\g s, a)
\dif a(s') = p(a(s^0) \g s,a) $ by the structural equation implied by the causal graph:
\begin{align}
A \stackrel{a.s.}{=} f(S, \epsilon).
\end{align} 

The intuition here is the observed $s,a$ will provide the same
information as $s', a(s'), $ for the same person. They all contain the
information about the background variable that affects $A$. We hence
have $p(a(s^0) \g s', a(s')) = p(a(s^0) \g a(s'), s', a, s) = p(a(s^0)
\g a(s'), a, s) $.

Notice that the right hand side of \Cref{eq:aastep} does not depend on
$s'$. This implies $p(\hat{y}^{\mathrm{aa}}(s', a(s'))\g S=s, A=a) =
p(\hat{y}^{\mathrm{aa}}(a(s), s)\g A=a, S=s)$. We simply repeat the
same calculation for $p(\hat{y}^{\mathrm{aa}}(s, a(s))\g S=s, A=a)$.

We hence have proved that the \gls{AA} predictor is \gls{AA}-fair:
\begin{align} P(\hat{Y}^{\mathrm{aa}}(s', A(s'))\g S= s, A = a) =
P(\hat{Y}^{\mathrm{aa}}(s, A(s))\g S= s, A = a).
\end{align}
It establishes the first part of \Cref{thm:aathm}.

We can also prove demographic parity for $\hat{Y}^{\mathrm{aa}}$.

\begin{align}
&P(\hat{Y}^{aa}(S, A)\g S) \\
= &\int P(Y^{\mathrm{eo}}(a')) P(A(s')=a'\g S,
A)P(s')\dif a'\dif s' P(A\g S) \dif A\\
= &\int P(Y^{\mathrm{eo}}(a')) P(A(s')=a'\g S)P(s')\dif a'\dif s'  \\
= &\int P(Y^{\mathrm{eo}}(a')) P(A(s')=a')P(s')\dif a'\dif s'  \\
\end{align}

The third equality is due to $A(s')\perp S$ by Pearl's twin network \citep{pearl2009causality}.

We next compute the
marginal distribution of $\hat{Y}^{aa}$.
\begin{align}
&P(\hat{Y}^{aa}(S, A)) \\
= &\int \int P(Y^{\mathrm{eo}}(a')) P(A(s')=a'\g S,
A)P(s')\dif a'\dif s' P(S, A) \dif A\dif S\\
= &\int P(Y^{\mathrm{eo}}(a')) P(A(s')=a')P(s')\dif a'\dif s'  \\
\end{align}

Therefore, we have
\begin{align}
P(\hat{Y}^{aa}(S, A)\g S) = P(\hat{Y}^{aa}(S, A)).
\end{align}
Hence,
\begin{align}
\hat{Y}^{aa}(S, A)\perp S.
\end{align}

This establishes demographic parity.

We next prove the second part of \Cref{thm:aathm}. We show that
$Y^{aa}$ minimizes $\mathrm{KL}(P(Y^{\mathrm{\mathrm{eo}}})||P(\hat{Y}^{\mathrm{aa}}))$. In
fact, $Y^{aa}$ recovers the marginal distribution of $\hat{Y}^{\mathrm{eo}}$.

\begin{align}
P(\hat{Y}^{aa}) = &P(\hat{Y}^{aa}(S, A)) \\
= &\int P(Y^{\mathrm{eo}}(a')) P(A(s')=a')P(s')\dif a'\dif s'  \\
= &\int P(Y^{\mathrm{eo}}(a')) P(A(s')=a')P(s')\dif a'\dif s'  \\
= &\int P(Y^{\mathrm{eo}}(a')) P(A=a')\dif a'  \\
= & P(\hat{Y}^{\mathrm{eo}}(A))\\
= & P(\hat{Y}^{\mathrm{eo}})
\end{align}

This gives $\mathrm{KL}(P(Y^{\mathrm{\mathrm{eo}}})||P(\hat{Y}^{\mathrm{aa}}))
= 0$. Hence, $\hat{Y}^{aa}$ minimizes this \gls{KL} and preserves the marginal distribution of $\hat{Y}^{\mathrm{eo}}$.

\end{proof}

\section{The correctness of \Cref{alg:eoaa}}

\label{sec:algcorrect}

We leverage the backdoor adjustment formula in
\citet{pearl2009causality} to compute $\hat{Y}^{\rm{eo}}$ from the
past admissions records $\{(S_i, A_i, Y_i)\}_{i=1}^n$.
\begin{prop}
\label{prop:compute-yeo}
Assume the causal graph \Cref{fig:setupmain}. The \gls{EO} predictor
can be computed using the observed data $(S_i, A_i, Y_i)$:
\begin{align}
\label{eq:compute-yeo}
P(\hat{Y}^{\rm{eo}}(s_{\rm{new}}, a_{\rm{new}})) = 
\int P(Y_i\g S_i=s', A_i = a_{_{\rm{new}}}) P(S_i=s')\dif s',
\end{align}
if 
\begin{align}
\label{eq:overlap}
P(A_i \in \mathbf{A} \g S_i=s) > 0
\end{align} for all $P(\mathbf{A}) > 0$,
$s\in \mathcal{S}$ and $\mathbf{A}\subset \mathcal{A}.$
\end{prop}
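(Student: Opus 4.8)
The plan is to identify the \gls{EO} predictor of \Cref{eq:counterfactualeo} with a backdoor-adjusted observational distribution, so that it can be evaluated purely from the conditional law of the historical decision. The starting point is that the fitted \gls{ML} predictor $f_{\acc}$ is obtained by maximizing the likelihood of the observed decisions; in the well-specified population limit it therefore recovers the observational conditional, $f_{\acc}(s,a) = P(Y_i \g S_i = s, A_i = a)$. Substituting this identity into the definition of the \gls{EO} predictor in \Cref{eq:counterfactualeo} turns the average of $f_{\acc}$ against $p(s)$ directly into the right-hand side of \Cref{eq:compute-yeo}. So the algebraic core of the proposition is just this substitution; the content lies in justifying that the resulting expression is the correct causal quantity and that it is identified from data.

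For the causal step I would appeal to the backdoor criterion of \citet{pearl2009causality} applied to the graph of \Cref{fig:setupmain}. The only backdoor path from $A$ to $Y$ (equivalently, to the decision node of the augmented graph in \Cref{fig:yhat}) is $A \leftarrow S \rightarrow Y$, and conditioning on $S$ blocks it; moreover $S$ is not a descendant of $A$. Hence $S$ is a valid adjustment set, and the adjustment formula yields $P(\hat{Y}^{\acc} \s \mathrm{do}(A = a_{\new})) = \int P(Y_i \g S_i = s', A_i = a_{\new}) \, P(S_i = s') \dif s'$. This matches the expression already derived inside the proof of \Cref{thm:eothm}, where the \gls{EO} predictor is shown to equal $P(\hat{Y}^{\acc} \s \mathrm{do}(a))$, so the two characterizations are consistent and \Cref{eq:compute-yeo} follows.

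Finally I would explain the role of the overlap assumption in \Cref{eq:overlap}. The adjustment integral is only meaningful if the conditional $P(Y_i \g S_i = s', A_i = a_{\new})$ is defined for every $s'$ carrying mass under $p(s)$, which requires each pair $(s', a_{\new})$ to arise with positive probability in the data. The positivity condition $P(A_i \in \mathbf{A} \g S_i = s) > 0$ whenever $P(\mathbf{A}) > 0$ is precisely what guarantees this, and hence that the target is identified and consistently estimable from finite samples. I expect the main obstacle to be exactly this identification bookkeeping---verifying the backdoor criterion on the augmented graph that includes the decision node, and pinning down that the overlap condition is the minimal requirement under which the substitution $f_{\acc}(s,a) = P(Y_i \g S_i = s, A_i = a)$ and the subsequent adjustment are both legitimate---rather than any hard computation.
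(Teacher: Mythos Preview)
Your proposal is correct and follows essentially the same approach as the paper: the paper's proof is a one-line invocation of Theorem~3.3.2 (the backdoor adjustment formula) in \citet{pearl2009causality}, and your argument simply unpacks that citation by verifying the backdoor criterion for $S$ in the graph of \Cref{fig:setupmain}, substituting $f_{\acc}(s,a)=P(Y_i\g S_i=s,A_i=a)$, and noting that the overlap condition \Cref{eq:overlap} is the positivity requirement needed for identification.
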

\begin{proof}
\Cref{prop:compute-yeo} is a direct consequence of Theorem 3.3.2 of
\citet{pearl2009causality}. 
\end{proof}
\Cref{eq:compute-yeo} reiterates the fact that $\hat{Y}^{\rm{eo}}$
does not simply ignore the sensitive attribute $S$; estimating
$P(Y_i\g S_i, A_i)$ relies on $S$ in the training data. However,
$\hat{Y}^{\rm{eo}}(s_{\rm{new}}, a_{\rm{new}})$ does not rely on
$s_{\rm{new}}$ in the test data.

We leverage the abduction-action-prediction approach
\citep{pearl2009causality} to compute $\hat{Y}^{\rm{aa}}$.
\begin{prop}
\label{prop:compute-yaa} 
Assume the causal graph \Cref{fig:setupmain}. Write
\begin{align}
A \stackrel{a.s}{=} f_A(S, \epsilon_A)
\end{align}
for some function $f_A$ and zero mean random variable $\epsilon_A$
satisfying $S\perp \epsilon_A$. The \gls{AA} predictor can be
computed using the observed data $(S_i, A_i, Y_i)$:
\begin{align}
&P(\hat{Y}^{\rm{aa}}(s, a) ) \nonumber\\ 
= & \int P(Y^{\rm{eo}}(s, a')) \cdot P(A=a'\g S=s',
\epsilon_A)\cdot P(\epsilon_A\g S=s, A=a)\cdot P(s')\dif a'\dif s'\label{eq:abduction}\\
\approx & \int P\left(Y^{\rm{eo}}\left(\E{}{A_i\g S_i=s'} + \hat{\epsilon}_A\right), s\right) \cdot P(s')\dif s'\label{eq:linapproxabduction}
\end{align}
where $\hat{\epsilon}_A = a- \E{}{A_i\g S_i=s}$.
\end{prop}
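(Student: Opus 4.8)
The plan is to derive the \gls{AA} predictor directly from its definition in \Cref{eq:counterfactualaa} and to replace its single non-observable ingredient---the counterfactual density $p(a(s')\g s_{\new},a_{\new})$---by an expression in observables, using Pearl's abduction--action--prediction procedure. Every other factor in \Cref{eq:counterfactualaa} is already a function of the data: $p(s)$ is the marginal law of the sensitive attribute and $f_{\eo}$ is the fitted \gls{EO} predictor. So the entire argument reduces to computing the counterfactual distribution of $A$ under an intervention on $S$, conditioned on the factual evidence $\{S=s,A=a\}$.

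First I would invoke the structural equation $A\stackrel{a.s.}{=}f_A(S,\epsilon_A)$ with $S\perp\epsilon_A$ and run the three counterfactual steps. \emph{Abduction} updates the exogenous variable to its posterior $p(\epsilon_A\g S=s,A=a)$; \emph{action} sets $S=s'$ via $\mathrm{do}(S=s')$, which by $S\perp\epsilon_A$ (and Pearl's twin-network argument, already used in \Cref{sec:aathmpf}) leaves that posterior unchanged; \emph{prediction} pushes $\epsilon_A$ through $f_A(s',\cdot)$, i.e.\ $p(A=a'\g S=s',\epsilon_A)$. Chaining the steps gives
\begin{align*}
p(a(s')=a'\g s,a)=\int p(A=a'\g S=s',\epsilon_A)\,p(\epsilon_A\g S=s,A=a)\dif\epsilon_A.
\end{align*}
Substituting this into \Cref{eq:counterfactualaa} and using \Cref{lemma:EOequiv} to write $P(Y^{\eo}(s,a'))=P(Y^{\eo}(a'))$ collapses the definition to exactly \Cref{eq:abduction}; this part is bookkeeping once the twin-network justification of the abduction step is in hand.

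The linear approximation \Cref{eq:linapproxabduction} then follows by specializing to the additive-noise model $A=g(S)+\epsilon_A$ with $g(S)=\E{}{A\g S}$ and $\E{}{\epsilon_A}=0$. For each fixed $s'$ the map $f_A(s',\cdot)$ is invertible in $\epsilon_A$, so the abduction posterior collapses to a point mass at $\hat\epsilon_A=a-g(s)=a-\E{}{A_i\g S_i=s}$, and prediction returns the deterministic counterfactual $a(s')=g(s')+\hat\epsilon_A=\E{}{A_i\g S_i=s'}+\hat\epsilon_A$. Both the $\epsilon_A$- and $a'$-integrals in \Cref{eq:abduction} then collapse, leaving \Cref{eq:linapproxabduction}.

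The hard part will be the abduction step: I must argue carefully that conditioning on the factual evidence $\{S=s,A=a\}$ and then intervening on $S$ does not perturb the law of $\epsilon_A$, which is exactly where $S\perp\epsilon_A$ and the twin-network construction carry the weight. This is also where the $\approx$ enters---the closed form is exact only when the structural equation is additive (so that abduction inverts $f_A$ to a single residual rather than a full posterior) and when $g=\E{}{A\g S}$ is known; in practice the error reflects departures from additivity and the estimation of $g$ from finite data, with estimability requiring an overlap condition in the spirit of \Cref{eq:overlap}.
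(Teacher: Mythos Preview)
Your proposal is correct and follows essentially the same route as the paper: the paper's own proof simply cites Pearl's abduction--action--prediction result (Theorem~7.1.7 of \citet{pearl2009causality}) for \Cref{eq:abduction} and attributes \Cref{eq:linapproxabduction} to a linear approximation of $f_A$, which is precisely the argument you unpack in detail. Your treatment of the additive-noise case, collapsing the posterior to a point mass at $\hat\epsilon_A$, is exactly the ``linear approximation'' the paper invokes without elaboration.
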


\begin{proof}
\Cref{eq:abduction} is a direct consequence of Theorem 7.1.7 of
\citet{pearl2009causality}. \Cref{eq:linapproxabduction} is due to a
linear approximation of $f_A$. 
\end{proof}

\Cref{alg:eoaa} can easily generalize to cases where sensitive
attributes are unobserved. In these cases, we can leverage recent
techniques in multiple causal inference like the approach from
\citet{wang2018blessings}.  These approaches construct substitutes for
the unobserved sensitive attributes from the descendants of the
sensitive attributes.

\section{Detailed results of the empirical studies}
\label{sec:empiricalsupp}

We study the fair algorithms on three real datasets:
% \parhead{Datasets.} 
\begin{itemize}[leftmargin=*]
\item \parhead{Adult income dataset.\footnote{https://www.kaggle.com/wenruliu/adult-income-dataset}} The task is to
  predict whether someone has a yearly income higher than \$50K; the
  decision is binary. The sensitive attributes are gender and race;
  other attributes include education and marital status. The dataset
  has 32,561 training samples and 16,282 test samples.

\item \parhead{ProPublica's COMPAS recidivism data.} The
  task is to predict an individual's recidivism score; the decision is
  real-valued.  The sensitive attributes are gender and race; other
  attributes include the priors count, juvenile felonies count, and
  juvenile misdemeanor count. The dataset has 6,907 complete
  samples. We split them into 75\% training and 25\% testing.

\item \parhead{German credit data.\footnote{https://www.kaggle.com/uciml/german-credit}} The task is to
  predict whether an individual has good or bad credit; the decision is
  binary. The sensitive attributes are gender and marital status;
  other attributes include credit history, savings, and employment
  history. The dataset has 1,000 samples. We split them into 75\%
  training and 25\% testing.
\end{itemize}

We report the detailed results of the COMPAS dataset and the German
credit dataset in \Cref{tab:compasmetrics} and
\Cref{tab:creditmetrics}.

% !TEX root = ../fair_dcf.tex

\begin{table}[t]
	\footnotesize		
	\centering
		\begin{tabular}{lcccc} 
			\toprule
			&\multicolumn{4}{c}{\textbf{Metrics ($\times 10^{-2}$) on COMPAS}} \\
			\midrule
			&\textbf{\gls{EO}} & \textbf{\gls{AA}} & \textbf{KL} & \textbf{Prediction} \\
			\midrule
			ML predictor $f_\ML$ & -68.1& -104.9 & 17.1 & 28.0 \\
			\cdashlinelr{1-5}
			\gls{FTU} & \textbf{0} & -52.9 & 0.5 & \textbf{25.6} \\
			\gls{EO} predictor $f_\eo$ & \textbf{0} & -36.8 & 0.5 & \textbf{25.6}\\
			\cdashlinelr{1-5}
			FairLearning  & -41.2 &  \textbf{0} & \textbf{0.2} & \textbf{22.7}\\
			\gls{AA} predictor $f_\rmaa$ & -41.2 & \textbf{0} & \textbf{0.2} & \textbf{22.7} \\
			\bottomrule
		\end{tabular}
		\vspace{5pt}
		\caption{Both the \gls{EO} predictor $f_\eo$ and \gls{FTU} are \gls{EO}-fair; they achieve zero in the \gls{EO} metric (lower is better).
			\gls{AA} predictor $f_\rmaa$ and FairLearning \cite{kusner2017counterfactual} are \gls{AA}-fair; they achieve zero in the \gls{AA} metric (lower is better).
			\gls{AA} predictor $f_\rmaa$ achieves demographic parity; it has close-to-zero \gls{KL} divergence (Lower is better.)
			ML predictor $f_\ML$ predicts best; \gls{EO} predicts best among the
			fair predictors (higher prediction scores are better.) We report mean values across individuals. \gls{EO} and \gls{AA} metric
			standard deviations are $\leq 0.42$ and $\leq 0.6$, respectively.
			\label{tab:compasmetrics}}
\end{table}

\begin{table}[t]
	\footnotesize	
		\centering	
	\begin{tabular}{lcccc} 
		\toprule
		&\multicolumn{4}{c}{\textbf{Metrics ($\times 10^{-2}$) on German Credit}} \\
		\midrule
		&\textbf{\gls{EO}} & \textbf{\gls{AA}} & \textbf{\gls{KL}} & \textbf{Prediction} \\
		\midrule
		ML predictor $f_\ML$ & -5.4 & -3.9 & 18.6 & 64.7 \\
		\cdashlinelr{1-5}
		\gls{FTU} & \textbf{0} & -2.0 & 15.6 & 63.4 \\
		\gls{EO} predictor $f_\eo$ & \textbf{0} & 1.5 & 13.2 & \textbf{64.5} \\
		\cdashlinelr{1-5}
		FairLearning  & -1.4 & \textbf{0} & 8.3 & 63.5\\
		\gls{AA} predictor $f_\rmaa$ & -1.3 & \textbf{0} &  \textbf{7.8} & \textbf{64.3} \\
		\bottomrule
	\end{tabular}
		\vspace{5pt}
	\caption{Both \gls{EO} predictor $f_\eo$ and \gls{FTU} are \gls{EO}-fair; they achieve zero in the \gls{EO} metric (lower is better).
		\gls{AA} predictor $f_\rmaa$ and FairLearning \cite{kusner2017counterfactual} are \gls{AA}-fair; they achieve zero in the \gls{AA} metric (lower is better).
		\gls{AA} predictor $f_\rmaa$ achieves demographic parity; it has close-to-zero \gls{KL} divergence (Lower is better.)
		ML predictor $f_\ML$ predicts best; \gls{EO} predicts best among the
		fair predictors (higher prediction scores are better.) We report mean values across individuals. \gls{EO} and \gls{AA} metric
		standard deviations are $\leq 0.02$ and $\leq 0.02$, respectively.
		\label{tab:creditmetrics}}
\end{table}

\begin{figure}[t]
\centering
\begin{subfigure}[b]{0.33\textwidth}
\centering
\includegraphics[width=\textwidth]{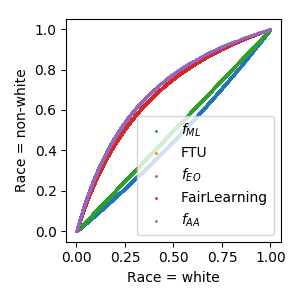}
\caption{\gls{EO} metric \label{fig:cfdecisionfixA}}
\end{subfigure}
\begin{subfigure}[b]{0.33\textwidth}
\centering
\includegraphics[width=\textwidth]{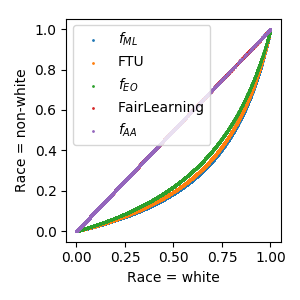}
\caption{\gls{AA} metric\label{fig:cfaadecisions}}
\end{subfigure}%
\caption{The \gls{EO} predictor satisfies the \gls{EO} criterion;
  the \gls{AA} predictor satisfies the \gls{AA} criterion. We compare
  counterfactual predictions for all individuals had they been white
  or non-white. (a) relates to the \gls{EO} metric: it holds the
  attribute values fixed. (b) allows attribute values to change after
  intervening on race.}
\end{figure}

We further discuss \gls{EO}-fairness and \gls{AA}-fairness. Consider
the \gls{EO}-fairness against race. For the Adult dataset,
\Cref{fig:cfdecisionfixA} plots $\mathbb{E}[\hat{Y}(s, a)\g S=s, A=a]$
against $\mathbb{E}[\hat{Y}(s', a)\g S=s, A=a]$, where $s$ and $s'$
only differ by the individual's race being white or non-white. A
method is \gls{EO}-fair if the predictions align with the diagonal.
Both the \gls{EO} predictor and \gls{FTU} align with the diagonal;
they are \gls{EO}-fair. None of classical ML, FairLearning, or the
\gls{AA} predictor are \gls{EO}-fair, and counterfactual
\gls{AA} and FairLearning are less \gls{EO}-fair than classical ML.

Now consider the \gls{AA}-fairness against race.  For the Adult
dataset, \Cref{fig:cfaadecisions} plots $\mathbb{E}[\hat{Y}(s, A(s))\g
S=s, A=a]$ against $\mathbb{E}[\hat{Y}(s', A(s'))\g S=s, A=a]$ when
$s$ and $s'$ differ by whether the individual is white or non-white.
If the predictions align with the diagonal, the decision is
\gls{AA}-fair. Both the \gls{AA} predictor and FairLearning align
with the diagonal; they are counterfactually fair. Counterfactual
\gls{EO} is less unfair than
\gls{FTU}. The classical ML predictor is most unfair. (These results
generalize across datasets.)

\begin{figure}
\centering
\begin{subfigure}[b]{0.33\textwidth}
\centering
\includegraphics[scale=0.5]{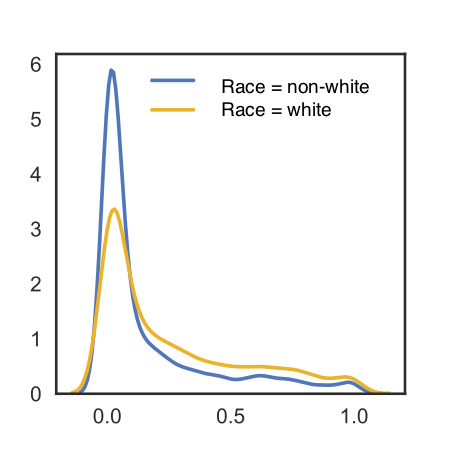}
\caption{The ML predictor\label{fig:logregdecisions}}
\end{subfigure}
\begin{subfigure}[b]{0.33\textwidth}
\centering
\includegraphics[scale=0.5]{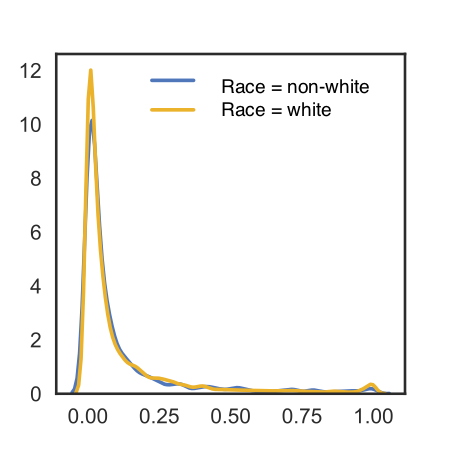}
\caption{The \gls{AA} predictor\label{fig:cfaadist}}
\end{subfigure}%
\caption{ The decision distributions of white and non-white
  individuals.  The \gls{AA} predictor produces equal distributions;
  it achieves demographic parity. \label{fig:distdiff}}
\end{figure}

Finally, we discuss demographic parity.  Demographic parity is a
group-level statistical measure that requires decisions to be
independent of sensitive attributes.  It has been used to measure
affirmative action \citep{dwork2012fairness}.  To evaluate demographic
parity, we compare the prediction distributions between the groups of
individuals with different sensitive attributes. The metric is the
symmetric \gls{KL} divergence between $P(\hat{Y} \g S=s)$ and
$P(\hat{Y} \g S=s')$. For a predictor achieving demographic parity,
the \gls{KL} is zero. (We evaluate the symmetric \gls{KL} by binning
the values of predictions.)

\Cref{tab:adultmetrics} present the symmetric \gls{KL} between the two
gender groups in the data. Both the \gls{AA} predictor and
FairLearning have close-to-zero symmetric \gls{KL}. The
\gls{AA} predictor generally has lower symmetric \gls{KL}; it is
closer to demographic parity. None of the other methods are close.

Now consider the demographic parity against race.
\Cref{fig:distdiff} demonstrates the prediction distributions of white
and non-white individuals. While classical ML has very different
prediction distributions for the two groups, the prediction
distributions of the \gls{AA} predictor is nearly identical.

%%% Local Variables:
%%% mode: latex
%%% TeX-master: "fair_dcf"
%%% End:
}
\clearpage
\putbib[BIB1]
\end{bibunit}

\end{document}